\renewcommand\bibname{References}
\let\proof\@undefined                        
\let\endproof\@undefined                  
\renewcommand{\todo}[2][]{\tikzexternaldisable\@todo[#1]{#2}\tikzexternalenable}
\newcounter{mycomment} 
\newlength{\luw}
\newlength{\luh}
\DeclareMathAlphabet{\mathcalligra}{T1}{calligra}{m}{n}
\DeclareMathAlphabet{\mathantt}{OT1}{antt}{li}{it}
\DeclareMathAlphabet{\mathpzc}{OT1}{pzc}{m}{it}
\DeclareMathOperator{\logsumexp}{logsumexp}
\DeclareMathOperator{\Var}{Var}
\DeclareMathOperator{\Cov}{Cov}
\DeclareMathOperator{\Li}{Li}
\DeclareMathOperator{\relu}{ReLU}
\DeclareMathOperator{\Heviside}{Heviside}
\newcommand{\argmax}{\mathop{\rm argmax}}
\renewcommand{\mid}{\,|\,}
\newcommand{\Real}{\mathbb{R}}
\newcommand{\R}{\mathcal{R}}
\newcommand{\E}{\mathbb{E}}
\newcommand{\N}{\mathcal{N}}
\renewcommand{\L}{\mathcal{L}}
\def\T{^{\mathsf T}}
\newcounter{myRomanCounter}
\def\d{{\rm d}}
\newcommand{\gray}{\color[rgb]{0.5,0.5,0.5}}
\newcommand{\red}{\color[rgb]{1,0,0}}
\renewcommand*{\paragraph}[1]{\par\noindent{\normalsize\bf #1}\,\xspace}
\def\mathrlap{\mathpalette\mathrlapinternal} 
\def\mathllap{\mathpalette\mathllapinternal}
\def\mathllapinternal#1#2{\llap{$\mathsurround=0pt#1{#2}$}}
\def\mathrlapinternal#1#2{\rlap{$\mathsurround=0pt#1{#2}$}}
\def\leftbb{\mathrlap{[}\hskip1.3pt[}
\def\rightbb{]\hskip1.36pt\mathllap{]}}
\def\epsilon{\varepsilon}
\newcommand{\IF}{\mbox{ \rm if }}
\let\parSym\S
\def\S{\mathcal{S}}
\def\cf{\emph{cf}\onedot} 
\newcommand{\revisit}[1][]{%
\ifthenelse{\equal{#1}{}}{
\ensuremath{\red \triangle}\xspace}{%
{\ensuremath{\red \rhd}\xspace}%
{\gray #1}%
{\ensuremath{\red \lhd}\xspace}%
}%
}
\def\anchor [#1]#2{%
\phantomsection{}#1\label{#2}%
\def\arga{#2}%
\global\expandafter\def\csname#2\endcsname{%
\hyperref[#2]{#1}\xspace%
}%
}%
\def\codefunction [#1]#2{%
\phantomsection{}\label{#2}{\ttfamily #1\xspace}%
\def\arga{#2}%
\global\expandafter\def\csname#2\endcsname{%
\hyperref[#2]{\ttfamily #1}\xspace%
}%
}
\newcolumntype{L}[1]{>{\raggedright\let\newline\\\arraybackslash\hspace{0pt}}m{#1}}
\newcolumntype{C}[1]{>{\centering\let\newline\\\arraybackslash\hspace{0pt}}m{#1}}
\newcolumntype{R}[1]{>{\raggedleft\let\newline\\\arraybackslash\hspace{0pt}}m{#1}}
\def\epsilon{\varepsilon}
\def\={\,{=}\,}
\newlength{\myskip}
\def\skipmyskip{\vspace{0.5\myskip}}
\let\corollary\@undefined
\let\c@corollary\@undefined
\let\endcorollary\@undefined
\let\definition\@undefined
\let\c@definition\@undefined
\let\enddefinition\@undefined
\let\proof\@undefined
\let\endproof\@undefined
\let\theorem\@undefined
\let\c@theorem\@undefined
\let\endtheorem\@undefined
\let\lemma\@undefined
\let\c@lemma\@undefined
\let\endlemma\@undefined
\let\example\@undefined
\let\c@example\@undefined
\let\endexample\@undefined
\let\remark\@undefined
\let\c@remark\@undefined
\let\endremark\@undefined
\let\proposition\@undefined
\let\c@proposition\@undefined
\let\endproposition\@undefined
\let\property\@undefined
\let\endproperty\@undefined
\newtheoremstyle{tightItalic}
  {0.5\myskip}
  {0\myskip}
  {}
  {}
  {\itshape}
  {.}
  { }
  {}
\newtheoremstyle{tightBf}
  {0.5\myskip}
  {0.5\myskip}
  {}
  {}
  {\bf}
  {.}
  {.5em}
  {}
\theoremstyle{definition}
\theoremstyle{tightBf}
\declaretheorem[style=tightBf,name=Lemma]{lemma}
\declaretheorem[style=tightBf,name=Proposition]{proposition}
\declaretheorem[style=tightBf,name=Observation]{observation}
\declaretheorem[style=tightBf,name=Example]{example}
\theoremstyle{tightItalic}
\newtheorem*{proof}{Proof}
\crefname{section}{\parSym}{\parSym\parSym}
\Crefname{section}{\parSym}{\parSym\parSym}
\crefname{appendix}{\parSym}{\parSym\parSym}
\crefname{observation}{Observation}{Observations}
\renewcommand\tableofcontents{%
    \@starttoc{toc}%
}
\DeclareMathOperator{\pa}{pa}
\begin{document}

\pagestyle{headings}
\mainmatter
\def\SubNumber{614}  

\def\mytitle{
Feed-forward Uncertainty Propagation in Belief and Neural Networks
}

\title{\mytitle}

\titlerunning{\,}
\authorrunning{\,}

\author{Alexander Shekhovtosv \qquad Boris Flach \qquad Michal Busta}
\institute{Czech Technical University in Prague}
%
\maketitle
\begin{abstract}
We propose a feed-forward inference method applicable to belief and neural networks. In a belief network, the method estimates an approximate factorized posterior of all hidden units given the input. In neural networks the method propagates uncertainty of the input through all the layers. In neural networks with injected noise, the method analytically takes into account uncertainties resulting from this noise. Such feed-forward analytic propagation is differentiable in parameters and can be trained end-to-end. Compared to standard NN, which can be viewed as propagating only the means, we propagate the mean and variance. 
The method can be useful in all scenarios that require knowledge of the neuron statistics, e.g. when dealing with uncertain inputs, considering sigmoid activations as probabilities of Bernoulli units, training the models regularized by injected noise (dropout) or estimating activation statistics over the dataset (as needed for normalization methods).
In the experiments we show the possible utility of the method in all these tasks as well as its current limitations.
%
%
%
\end{abstract}
%
\setcounter{tocdepth}{2}
\makeatletter
\renewcommand*\l@author[2]{}
\renewcommand*\l@title[2]{}
\makeatletter
%
%
\raggedbottom
%
%
\section{Introduction}
In this work we join ideas from graphical models and mainstream NNs and present a feed-forward propagation that one one hand corresponds to an approximate Bayesian inference and on the other is trainable end-to-end. A today's popular view is that statistical and Bayesian methods are not needed and that discriminative end-to-end training is completely sufficient. Let us therefore give examples of problems where statistical tasks arise in NNs.

One important case is when the input is noisy or has some components missing. For many sensors the noise level is known or confidences per measurement are provided (\eg, LIDAR, computational sensors). Also, not all values may be observable all the time (\eg, depth sensors). In \cref{fig:input-noise} we illustrate the point that the average of the network output under noisy input differs from propagating the clean input. Taking into account the uncertainty of the input, an uncertainty of the output can be estimated and used for further processing. In classification networks, propagating the uncertainty of the input can impact the confidence of the classifier and its robustness~\cite{AstudilloN11}. Ideally, we would like that a classifier is not 99.99\% confident when making errors, however such high confidences of wrong predictions are actually observed in NNs~\cite{Moosavi-Dezfooli_2017_CVPR,Fawzi-16-robustness,Szegedy-14-intriguing,nguyen2015deep,Rodner16_FRN}.

Another example is training with dropout~\cite{srivastava14a}. Randomly deactivating neurons can be viewed as multiplicative Bernoulli noise. At the training time, the noise is sampled, so that the learning objective is the expectation of the loss over the noise and the training dataset. However, at test time, the noisy units are replaced by their expected values. Better approximating the expected value of the output may result in a faster training and better test-time performance \cite{wang2013fast}.

Another example is the popular interpretation of sigmoid activations in NNs as probabilities of part detectors and of the whole NN as a hierarchy of such part detectors. If this interpretation is adopted, each inner layer has to deal with its uncertain input defined by the activation probabilities. Considering all hidden units as Bernoulli random variables, we arrive at a sigmoid belief network~\cite{Neal:1992}. With respect to this model one may ask the question whether a specific value computed by the NN indeed represents some probability and how accurate it is. Without a probabilistic model such questions cannot be posed and the probabilistic interpretation becomes purely speculative.

Yet another example is computing expectations of neuron activations when the inputs range over the whole training dataset. It may turn out that the inputs to some non-linearity are always in its saturating part for the whole dataset due to the accumulated bias, or collapse to a single point due to accumulated scaling. Such statistics are crucial in initializing and normalizing NNs~\cite{IoffeS15}. Analytic estimates in networks with random weights were shown to predict well the training and test performance~\cite{Schoenholz2016DeepIP}.

\begin{figure}[t]
\centering
\includegraphics[width=0.8\linewidth]{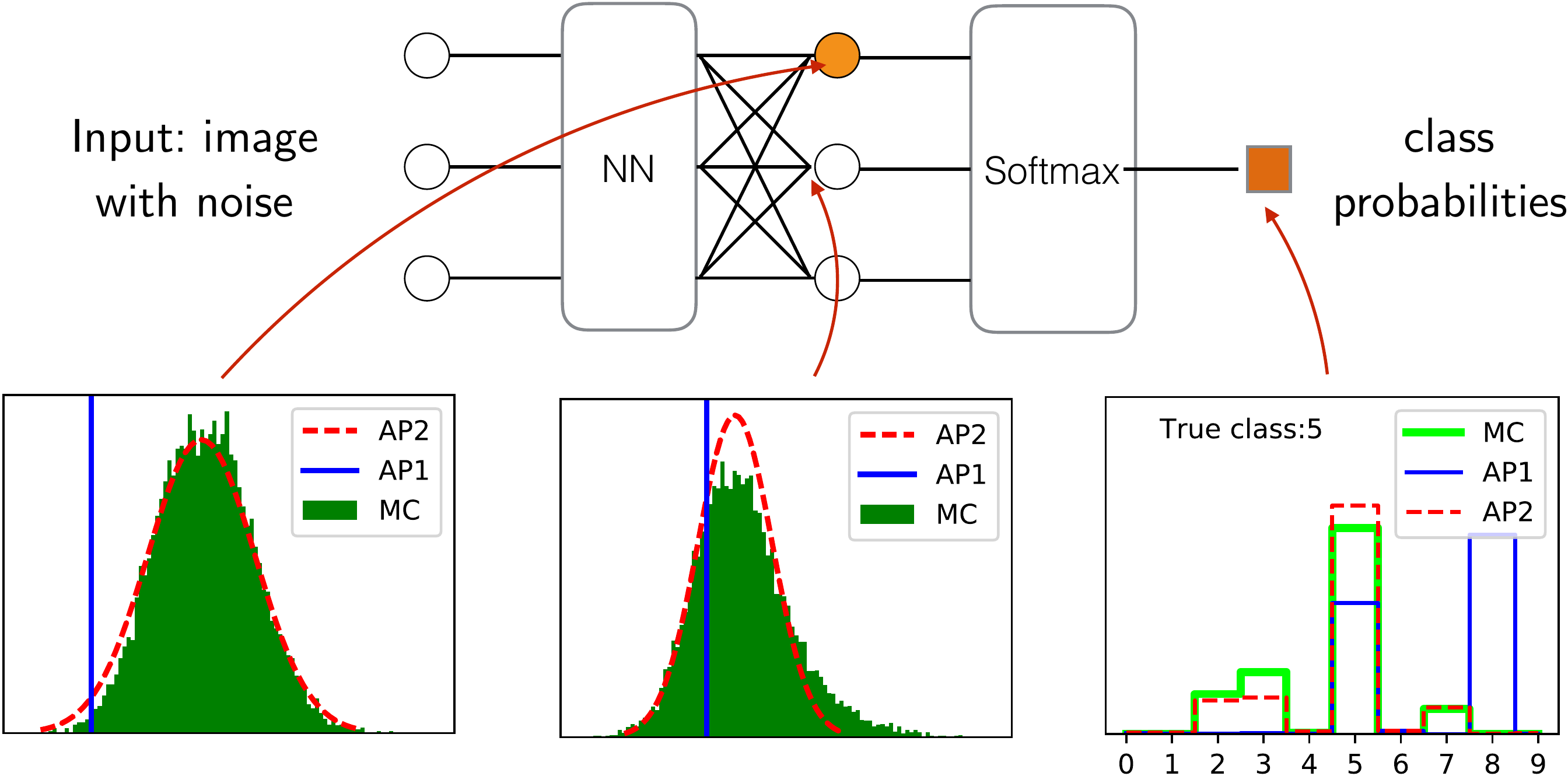}%
\caption{\label{fig:input-noise}
Illustrative example of propagating an input perturbed with Gaussian noise $\N(0,0.1)$ through a fully trained LeNet.
When the same image is perturbed with different samples of noise, we observe on the output empirical distributions shown as MC histograms. Propagating the clean image results in the estimate denoted AP1 which may be away from the MC mean. Propagating mean and variance, as proposed in this paper, results in a posterior Gaussian distribution denoted by AP2 which approximates MC distribution better. For the final class probabilities we approximate the expected value of the softmax. A quantitative evaluation of this case is given in~\cref{sec:experiments}.
}
\end{figure}

To address the above statistical tasks in NNs we will use Bayesian networks, which are well-established models for reasoning with uncertainty. 
Let us be more precise.
%
{\em Bayesian}, \aka~{\em belief}, networks are composed of random variables $X^k$, each variable being conditionally dependent on its parents through $p(X^k \mid X^{\pa(k)})$. 
A special case is a {\em sigmoid belief network}~\cite{Neal:1992} in which random variables are binary-valued and $p(X^k{=1} \mid X^{\pa(k)}) = \S({w^k}\T X^{\pa(k)})$, where $\S$ is the logistic sigmoid function.
A typical Bayesian inference problem consists in determining the posterior distribution of variables of interest given the values of input variables. It implies computing the expectation over all hidden random variables. This marginalization takes into account the uncertainty of all the intermediate hidden variables, but is intractable to compute in general.
{\em Neural networks}, on the other hand, are composite nonlinear mappings of the form $z^k\,{=}\,f^k({w^k}\T\hskip-0.5ex z^{\pa(k)})$, where $z$ are continuous non-random variables and the activation functions $f^k$ are deterministic.
The following connection with sigmoid belief networks exists~\cite{Dayan-95,Flach-17}. 
In the Bayesian inference problem let us approximate the expectation $\E[\S({w^k}\T X^{\pa(k)})]$ with $\S({w^k}\T \E [X^{\pa(k)}])$, \ie, substituting the expectation into the activation function and assuming that the variables $X^{\pa(k)}$ are independent. Associating  the neural network variables $z^k$ with means $\E[X^k]$, we obtain the standard forward propagation in a neural network with sigmoid activation functions.
%
While there exist more elaborate inference methods for belief networks (variational, mean field, Gibbs sampling, \etc), they are computationally demanding and can hardly be applied on the same scale as state-of-the-art NNs.

\paragraph{Contribution}
Our theoretical contribution is the derivation of a feed-forward propagation method for the inference problem in Bayesian networks, connecting the methods developed for graphical models (such as belief propagation) and methods developed for neural networks. Starting from the inference problem formulation, we derive a method that propagates means and variances of all random variables through each consecutive layer. Similar propagation rules were proposed before in a more narrow context~\cite{AstudilloN11,wang2013fast}.

Our technical contribution includes the development of numerically suitable approximations for propagating means and variances for activation functions such as sigmoid, {\tt ReLU}, {\tt max} and, importantly, {\tt softmax} that makes the whole framework practically operational and applicable to a wider class of problems. These technical details are important but postponed to the appendix in the sake of clarity. 

The proposed uncertainty propagation enjoys the following useful properties. The model is continuously differentiable even with discontinuous activation functions such as the Heaviside step function, provided that there is some uncertainty. It automatically smoothes out nonlinearities of more uncertain units, allowing better gradient propagation. It takes into account all injected noises, such as dropout, analytically (as opposed to sampling these noises), which is useful both at training and test time. It provides cheaper estimates of statistics over the dataset, can improve training speed, generalization and stability of NNs.

Experimentally, we verify the accuracy of the proposed propagation in approximating the Bayesian posterior and compare it to the standard propagation by NN, which has not been questioned before. This verification shows that the proposed scheme has better accuracy than standard propagation in all tested scenarios. We identify cases where the model scales well and cases in which it has limitations and demonstrate its potential utility in end-to-end learning.
\section{Related Work}\label{sec:related}
{\em Uncertainty propagation} through a multilayer perceptron~\cite{AstudilloN11} has been considered in a limited setting for improving robustness in speech recognition under inputs perturbed with Gaussian noise. 
A rather general framework of propagating means and variances was proposed under the name {\em fast dropout training}~\cite{wang2013fast}. A similar feed-forward propagation using Gaussian posterior approximations was proposed as a part of {\em probabilistic backpropagation}~\cite{Hernandez-15-PBP}. 
\Wrt these methods ours is a significant extension. We make a connection to Bayesian inference, derive refined approximations, evaluate approximation accuracy and demonstrate a wider range of applications. Another important extension is an analytic propagation through the softmax layer.
%
%

Variational inference in {\em nonlinear Gaussian belief networks}~\cite{Frey-99} uses a factorized approximation of the posterior and involves computing means and variances of activation functions such as ReLU. These components are similar to our work but the inference and learning methods are different. 
The relationship between Bayesian networks and NNs in the context of variational inference has been recently studied in~\cite{Kingma-14-Gradient-based}. These connections are further detailed in~\cref{sec:NLGBN}.
Our method is also related to {\em expectation propagation} in belief networks~\cite{Minka-2001} and {\em loopy belief propagation}~\cite{Pearl-88}. Expectation propagation is derived using forward KL divergence (similar to this work) and was shown equivalent~\cite{Minka-2001} to belief propagation. These methods are iterative and to our knowledge have not been applied to NNs.

{\em Stochastic back-propagation} methods~\cite{Williams1992,Bengio2013EstimatingOP,rezende14} build estimators of the gradient in a network by using samples of noise at the training time and standard propagation at test time.
We estimate marginal probabilities in the stochastic network both at training and test time analytically. 
%
%
Analytic estimates of statistics of hidden units for purposes related to initialization and normalization occurs in~\cite{Klambauer-SELU,ArpitZKG16,Schoenholz2016DeepIP} under the assumption that weights are randomly distributed, which we do not make.
The proposed method may be also relevant in the context of Bayesian model estimation, \ie, inferring the posterior over the parameters given the data~\cite{MacKay-92-Bayesian,Kingma-15-dropout,Hernandez-15-PBP}.

\section{The Method}\label{sec:models}
We consider a Bayesian network organized by layers. There are $l$ layers of hidden random variables $X^k$, $k=1,\dots l$ and $X^0$ is the input layer. Each variable $X^k$ has $n_k$ components (units in layer) denoted $X^k_i$. A conditional {\em Bayesian network} (aka belief network) is defined by the pdf 
\begin{align}\label{DCIM}
p(X^{1,\dots l} \mid X^0) = \textstyle \prod_{k=1}^{l} \prod_{i=1}^{n_k} p(X^k_i \mid X^{k-1}).
\end{align}
The neural network can be seen as a special case of this model by writing mappings such as $Y = f(X)$ as a conditional probability $p(Y \mid X) = \delta(Y - f(X))$, where $\delta$ is the Dirac delta function.
We will further denote values of \rv $X^k$ by $x^k$, so that the event $X^k\,{=}\,x^k$ can be unambiguously denoted just by $x^k$.
\subsection{Feed-forward Approximate Inference}\label{ff-inference}
In the Bayesian Network~\eqref{DCIM}, the posterior distribution of each layer $k>1$ given the observations $x^0$ recurrently expresses as
\begin{equation}\label{DCIM-forward}
p(X^k \mid x^0) = \E_{X^{k-1} |\,x^0\,} \big[p(X^k \mid X^{k-1})\big]
 = \int p(X^k \mid x^{k-1}) p(x^{k-1} \mid x^{0}) \, dx^{k-1}.
\end{equation}
The posterior distribution of the last layer, $p(X^l \mid x^0)$ is the prediction of the model.

In general, the expectation~\eqref{DCIM-forward} is intractable to compute and the resulting posterior can have a combinatorial number of modes. However, in many cases of interest it might be sufficient to consider a factorized approximation of the posterior $p(X^k \mid x^0) \approx q(X^k) = \prod_i q(X^k_i)$.
We expect that in many recognition problems, given the input image, the hidden states and the final prediction are concentrated around some specific values (unlike in generative problems, where the posterior distributions are typically multi-modal).
The best approximation in terms of forward KL divergence $KL(p(X^k \mid x^0) \| q(X^k))$ is given by the marginals: $q(X^k_i) = p(X^k_i \mid x^0)$, a well-known property explained in~\cref{sec:proofs} for completeness.

The factorized approximation can be computed layer-by-layer, assuming that marginals of the previous layer were already approximated. Plugging the approximation $q(X^{k-1})$ in place of $p(X^{k-1}\mid x^0)$ in~\eqref{DCIM-forward} results in the procedure
\begin{equation}\label{DCIM-Bayes}
 q(X^k_i) = \E_{q(X^{k-1})} \big[ p(X^k_i \mid X^{k-1}) \big]
 = \int p(X^k_i \mid x^{k-1}) \prod_i q(x^{k-1}_i) \, dx^{k-1}.
\end{equation}
This expectation may be still difficult to compute exactly and we will consider suitable approximations later on.

Let us now show how these updates lead to propagation of moments. For binary variables $X^k_i$, occurring in sigmoid belief networks, the distribution $q(X^k_i)$ is fully described by one parameter, \eg, the mean $\mu_i = \E_{q(X^k_i)} [X^k_i] = q(X^k_i{=}1)$. The propagation rule~\eqref{DCIM-Bayes} becomes
\begin{align}\label{moments-binary}
\mu_i = \E_{q(X^{k-1})} \big[ p(X^k_i{=}1 \mid X^{k-1}) \big];\ \ \ \ \ \ \
\sigma_i^2 = \mu_i(1-\mu_i),
\end{align}
where the variance is dependent but will be needed in propagation through other layers.
For a continuous variable $X^k_i$, we approximate the posterior with a Gaussian distribution $q(X^k_i)$. The closest approximation to the true posterior by a Gaussian distribution $\N(\mu_i, \sigma_i^2)$ w.r.t.~forward KL divergence is given by matching the moments (also detailed in~\cref{sec:proofs}):
\begin{subequations}\label{moments-continuous}
\begin{align}
\mu_i &= \textstyle \int y\,\E_{q(X^{k-1})} [p(X^k_i\=y \mid X^{k-1})] \, dy; \\
\sigma_i^2 &= \textstyle \int y^2\,\E_{q(X^{k-1})} [p(X^k_i\=y \mid X^{k-1})] \, dy - \mu_i^2.
\end{align}
\end{subequations}
When $p(X^k \mid X^{k-1})$ takes the form of a deterministic mapping $X^k = f(X^{k-1})$ as in neural networks, these moments simplify to 
\begin{align}\label{deterministic-prop}
\mu_i = \E_{q(X^k)} [ f(X^k)]; \ \ \ \ \ \ \
\sigma_i^2 = \E_{q(X^k)} [ f^2(X^k)] - \mu_i^2.
\end{align}
We can therefore represent the approximate inference in networks with binary and continuous variables as a feed-forward moment propagation: given the approximate moments of $X^{k-1} \mid x^0$, the moments of $X^k_i\mid x^0$ are estimated via~\eqref{moments-binary}, \eqref{moments-continuous} ignoring dependencies between $X^{k-1}_j \mid x^0$ on each step (as implied by the factorized approximation).
In the case of categorical variables occurring on the output of classification networks, we consider the full discrete distribution $q(X^k)$. This case cannot be viewed as propagating moments and is treated separately.
\subsection{Propagation in CNNs}
\paragraph{Linear Layers} The statistics of a linear transform $Y= w\T X$ are given by
\begin{subequations}\label{linear}
\begin{align}
\mu' &= \E[Y] = w\T \E[X] = w\T \mu;\\ 
\sigma'^2 &= \textstyle \sum_{ij} w_i w_j \Cov[X] \approx \sum_{i} w_i^2 \sigma_i^2,
\end{align}
\end{subequations}
where $\Cov[X]$ is the covariance matrix of $X$. The approximation of the covariance matrix by its diagonal is exact when $X_i$ are uncorrelated. The correlation of the outputs depends on the current weights $w$ and is zero when the weights of the preceding layer are orthogonal. Note that random vectors are approximately orthogonal and therefore the assumption is plausible at least on initialization.
\paragraph{Coordinate-wise mappings}
Let $X$ be a scalar \rv with statistics $\mu, \sigma^2$ and $Y=f(X)$. Assuming that $X$ is distributed as $\N(\mu, \sigma^2)$, we can approximate the expectations~\eqref{deterministic-prop} by analytic expressions for most of the commonly used non-linearities. The derivation will be given in the following subsections. 
\cref{fig:Heaviside,fig:LReLU} show the approximations derived for propagation through several standard nonlinearities. It includes the case of Bernoulli-logistic unit (\cref{sec:sigmoid}), which has a larger output variance due to its stochasticity. Note that all expectations under Gaussian distribution, unlike the original functions, result in propagation functions that are always smooth.
\begin{figure}[b]
\centering
\includegraphics[width=0.33\linewidth]{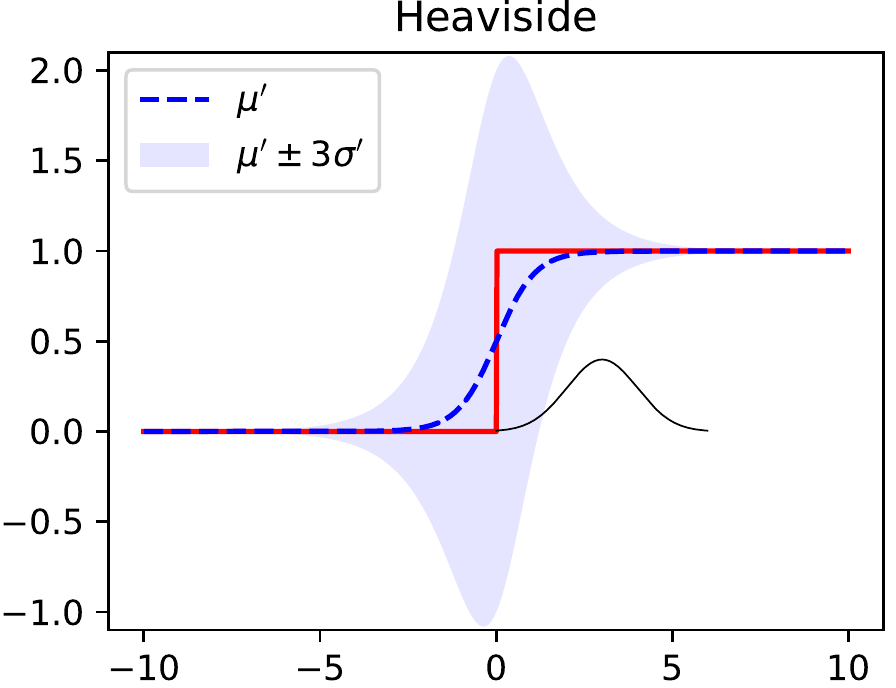}%
\includegraphics[width=0.33\linewidth]{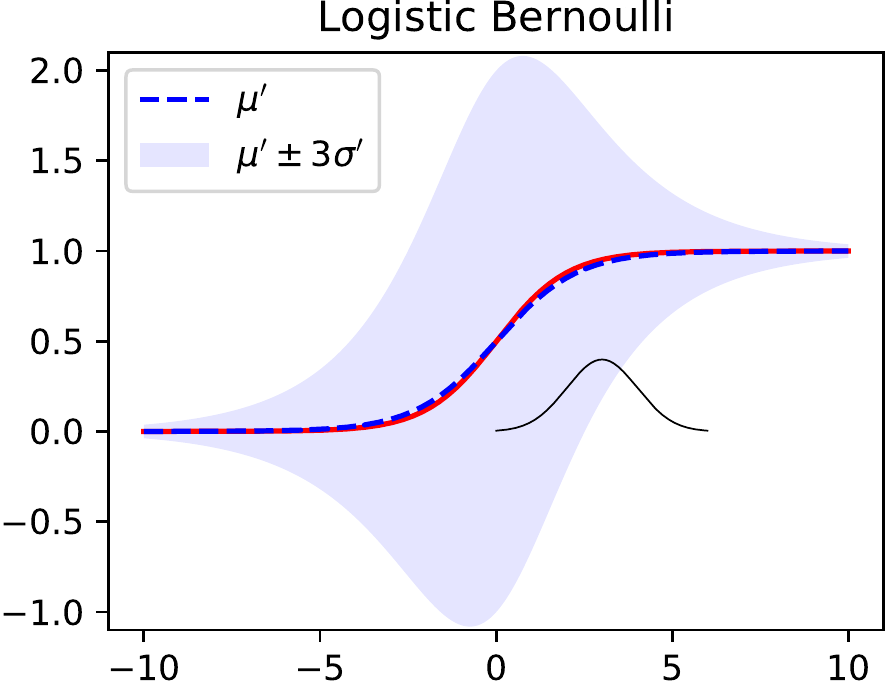}%
\includegraphics[width=0.33\linewidth]{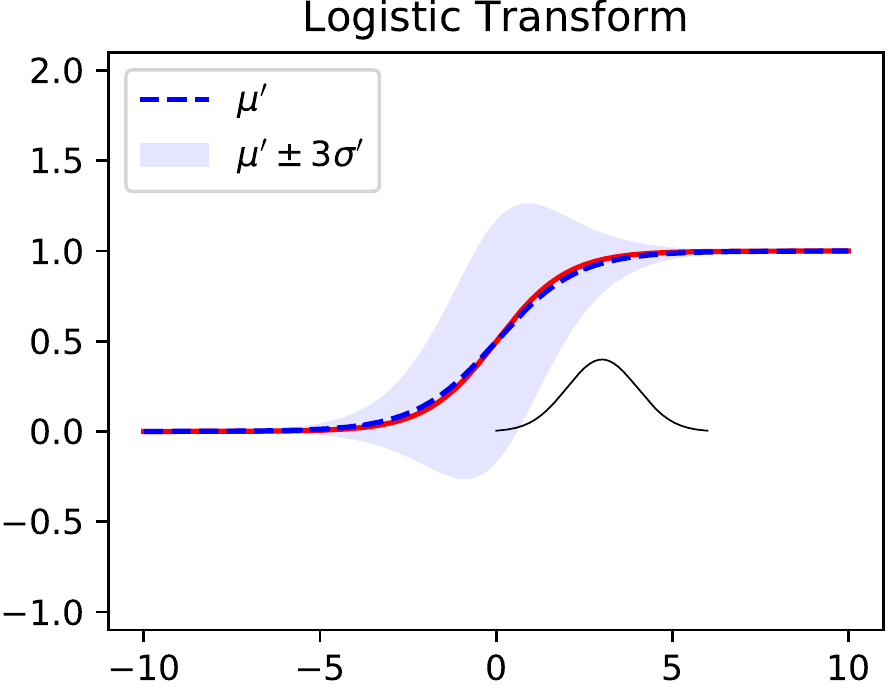}%
\caption{\label{fig:Heaviside}
Propagation for different non-linearities. Heaviside function: $Y =\leftbb X{\geq}0\rightbb$, Bernoulli-logistic: $Y$ is a Bernoulli \rv with $p(Y{=}1 \mid X) = \S(X)$ and logistic-transform: $Y = \S(X)$. Red: activation function. Black: an exemplary input distribution with mean $\mu=3$, variance $\sigma^2 = 1$ shown with support $\mu \pm 3\sigma$. Dashed blue: the approximate mean $\mu'$ of the output versus the input mean $\mu$. The variance of the output is shown as blue shaded area $\mu' \pm 3\sigma'$.
}
\end{figure}
\begin{figure}[b]
\centering
\includegraphics[width=0.33\linewidth]{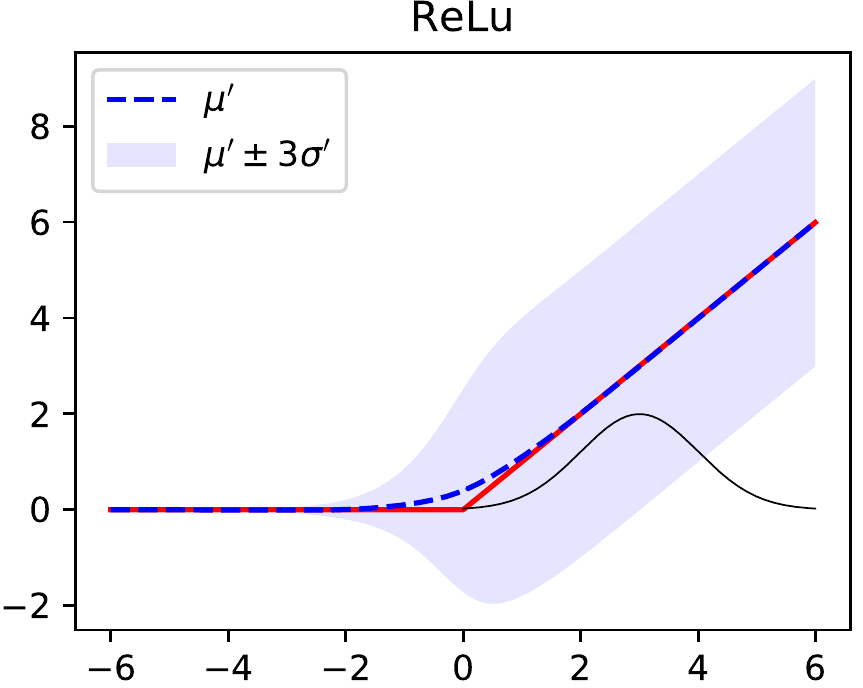}%
\includegraphics[width=0.33\linewidth]{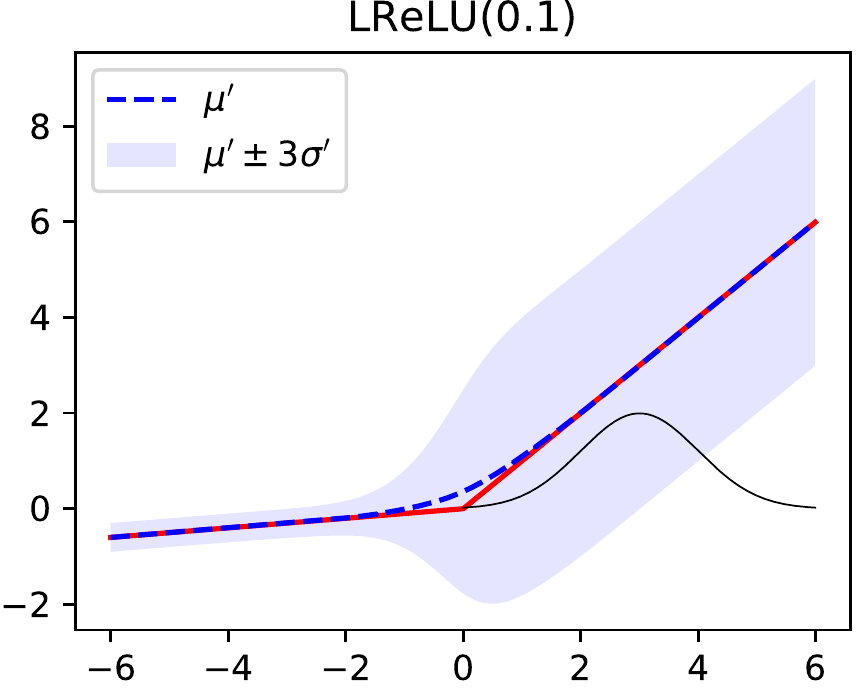}%
\caption{\label{fig:LReLU}
Propagation in ReLU: $\max(0, X)$ and leaky ReLU: $\max(X, \alpha X)$ with $\alpha=0.1$.}
\end{figure}
%
%
%
%
\paragraph{Other Layers}
The same principle can be applied to propagation in other layers. 
For example, statistics of the product $X_1 X_2$ of independent \rv's express as
\begin{align}
\mu' = \mu_1 \mu_2;\ \ \ \ \ \ \
\sigma'^2 = \sigma_1^2 \sigma_2^2 + \sigma_1^2 \mu_2^2 + \mu_1^2 \sigma_2^2.
\end{align}
This is used, \eg, in the propagation through dropout, which can be viewed as multiplicative Bernoulli noise.
We derive an approximation for the maximum of two independent random variables $\max(X_1, X_2)$, which allows to model maxOut and max pooling (in the latter we need to compose the maximum of many variables hierarchically, assuming conditional independence). Closely related to the maximum and crucial in classification problems, is the softmax layer. We derive several approximations to it, of varying complexity and accuracy.
%
\subsection{Sigmoid Belief Networks}\label{sec:sigmoid}
\paragraph{A Simple Approximation}
Sigmoid belief networks~\cite{Neal:1992} are an important special case in which standard neural networks can be viewed as an approximation to the inference. 
\par
Let $X$ denote a layer of units and $Y$ denote a single hidden unit with $p(Y{=}1\mid X) = \S(w\T X)$ \footnote{The bias term may be included by assuming, \eg, $X_0=1$.}.
The required propagation~\eqref{moments-binary} expresses as $q(Y{=}1) = \E_{q(X)} \S(w \T X)$.
The first order Taylor series expansion for the mean results in the approximation~\cite{Dayan-95,Flach-17}: 
\begin{align}\label{AP1}
\tag{AP1}
\E_{q(X)} \S(w \T X) \approx \S(w\T \E_{q(X)} X),
\end{align}
\ie, the expectation is substituted under the function.
Using the mean parameters $\mu^X_i = q(X_i{=}1)$, the approximation can be written as $\mu^Y = \S(w\T \mu^X)$ and we recover the standard forward propagation rule in a sigmoid NN.
The following example shows that this approximation may be significantly under- or overestimating.
\begin{example}[Logical AND]\label{example:and}
Consider a unit that has to perform logical reasoning with uncertain inputs such as: if we see a car wheel with probability 50\% and a car roof with probability 30\%, what is the probability that the two parts are present simultaneously?
Let the two binary inputs $X_1, X_2$ have probabilities $p(X_1{=}1)$ and $p(X_2{=}1)$, respectively. We want to approximate the true expectation of $X_1 \wedge X_2$ that both inputs are active. We evaluate the logistic model $p(Y{=}1\mid X) = \S (a(X_1+X_2) + b)$.
The parameters $a,b$ are chosen as follows: for the input $(1,1)$ we require probability $P(Y{=}1\mid X)$ to be at least $1-\varepsilon$ and for inputs $(1,0)$ or $(0,1)$ it must be at most $\varepsilon$. For $\varepsilon=0.05$ this gives $a = 5.89$ and $b = -1.5$, see~\cref{AND-params}. We then compare the exact expectation of the logistic model $\E [Y] = \E_X [p(Y{=}1 \mid X)]$ and the approximation by~\cref{AP1}.
Results in~\cref{tab:AND} show that the expectation of the logistic model fits $\E [X_1 \wedge X_2]$ accurately, while the approximation AP1 is severely underestimating and does not distinguish the cases $(0,1)$ and $(0.5, 0.5)$.
\end{example}
\begin{table}[t]
\centering
\small
\setlength{\tabcolsep}{3pt}
\begin{tabular}{ll|l|l|l}
$p(X_1{=}1)$ & $p(X_2{=}1)$ & $\E [X_1{\wedge}X_2]$ & $\E [Y]$ & AP1\\
\hline
0 & 0 & 0 & 0.00015 & 0.00015 \\
0 & 1 & 0 & 0.05 & 0.05 \\
1 & 1 & 1 & 0.95 & 0.95 \\
0.25 & 0.25 & 0.0625 & 0.077 & 0.0027	 \\
0.5 & 0.5 & 0.25 & 0.26 & 0.05 \\
0.75 & 0.75 & 0.56 & 0.55 & 0.5\\
\end{tabular}
\caption{Logic AND gate in the expectation, Logistic model and standard NN. 
}\label{tab:AND}
\end{table}
This example shows a conceptual flaw in viewing the sigmoid NN as a hierarchy of part detectors since the probabilities estimated by the units in a hidden layer are not taken into account correctly in the next layer.
Approximating the OR gate with a standard NN runs into a similar problem of overestimating the probability. A more accurate approximation can be extremely useful in modeling logic gates and robust statistics of uncertain inputs. 
%
\paragraph{Improved Approximation: Latent Variable Model}\label{sec:new-approx}
We now derive a better approximation for the expectation of the logistic Bernoulli unit making some assumptions about the input. The approximation is essentially similar to~\cite[eq. 8]{MacKay-92-classification}, but we explain the latent variable view, which leads to somewhat different constants an will be important for treating the softmax.

The difficulty in computing $\E_{q(X)} [ \S(w\T X) ]$ stems from the fact that $X_i$ are binary and therefore the activation $U = w\T X$ has a discrete distribution with mass in combinatorially many points. 
Noting that $\S$ is itself the cdf of a logistic distribution, we make the following observation (known as ``latent variable model'' in logistic regression, see also Bernoulli-logistic unit~\protect{\cite[p.4]{Williams1992}})
\begin{observation}\label{O:logistic-threshold}
Let $Z$ be a \rv with cdf $\S$, independent from $X$. 
Then $\E [\S(w\T X)] = \Pr\{ w\T X - Z \geq 0 \}$.
\end{observation}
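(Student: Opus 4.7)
The plan is to verify the identity by a one-line conditioning argument, using only (i) the definition of $\S$ as a cdf and (ii) the independence of $Z$ from $X$. No approximation or limiting argument is required; this is an exact equality.

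First I would rewrite the event as $\{w\T X - Z \geq 0\} = \{Z \leq w\T X\}$. Next I would apply the tower property of expectation, conditioning on $X$:
\begin{equation*}
\Pr\{Z \leq w\T X\} = \E_{X}\bigl[\Pr\{Z \leq w\T X \mid X\}\bigr].
\end{equation*}
By independence of $Z$ and $X$, the inner conditional probability, evaluated at $X=x$, becomes $\Pr\{Z \leq w\T x\}$, which by the assumption that $\S$ is the cdf of $Z$ equals $\S(w\T x)$. Substituting back yields
\begin{equation*}
\Pr\{Z \leq w\T X\} = \E_{X}\bigl[\S(w\T X)\bigr],
\end{equation*}
which is the claim.

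There is no genuine obstacle; the only subtlety is to be careful that $Z$ and $X$ are defined on a joint probability space so that the mixed probability $\Pr\{w\T X - Z \geq 0\}$ is well defined and the tower property applies. This is built into the statement (``$Z$ \ldots independent from $X$''), so the argument goes through immediately. The observation is nevertheless conceptually important because it reformulates the intractable expectation $\E[\S(w\T X)]$ as a probability of a half-space event for the shifted activation $U - Z$, which is the starting point for later Gaussian approximations of $U$ (and subsequently of the softmax).
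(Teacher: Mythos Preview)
Your proof is correct and matches the paper's own argument essentially line for line: both condition on $X$, use independence to replace the inner conditional probability by the cdf $\S$ evaluated at $w\T X$, and then identify the result with $\Pr\{w\T X - Z \geq 0\}$. The only cosmetic difference is direction---the paper starts from $\E[\S(w\T X)]$ and arrives at the probability, whereas you start from the probability and arrive at the expectation.
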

\begin{proof}
Since $\S(z) = \Pr\{ Z\,{\leq}\,z \}$ we have $\E S(w\T X) = \E_X \Pr\{Z \leq w\T X \mid X \} = \Pr\{ w\T X - Z \geq 0 \}$.
\end{proof}
\par
The density of the difference $w\T X - Z$ is given by the convolution of the discrete density of $w\T X$ with a smooth bell-shaped logistic density of $Z$. This efficiently smooths the discrete density of $U$ and the distribution of $w\T X - Z$ tends to normality when weights $w$ are random or bounded and the dimension increases.
It allows to use the following approximation:
\begin{proposition}\label{prop:gaussian-input}
Assuming that $w\T X - Z$ has normal distribution, we can approximate
\begin{align}\tag{AP2a}\label{AP2a}
\E_X \S(w\T X) \approx \Phi(\mu/(\sigma^2 + \sigma^2_\S)^{\frac{1}{2}}),
\end{align}
where $\Phi$ is the cdf of standard normal distribution, $\mu = \E (w\T X)$, $\sigma^2 = \Var(w\T X)$ and $\sigma^2_\S = \pi^2/3$ is the variance of the standard logistic distribution.
\end{proposition}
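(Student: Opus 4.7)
The plan is to invoke \Cref{O:logistic-threshold} to convert the expectation into a tail probability, and then compute that tail probability under the stated Gaussian assumption. Concretely, the observation gives
\[
\E_X \S(w\T X) = \Pr\{\, w\T X - Z \geq 0 \,\},
\]
where $Z$ is logistic and independent of $X$. So the task reduces to computing $\Pr\{W \geq 0\}$ for $W := w\T X - Z$, and the proposition's hypothesis says we may treat $W$ as Gaussian.

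First I would determine the two parameters of this Gaussian. By independence of $X$ and $Z$, the mean of $W$ is $\E[w\T X] - \E[Z] = \mu - 0 = \mu$, since the standard logistic distribution is symmetric about zero. Its variance is $\Var(w\T X) + \Var(Z) = \sigma^2 + \sigma^2_\S$, using the standard fact that the logistic distribution with cdf $\S$ has variance $\pi^2/3$ (this can be recovered by a direct integral of $z^2 \S'(z)$ or looked up as a textbook moment computation, and is the one ingredient I would cite rather than derive).

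Next I would standardize and express the result in terms of $\Phi$:
\[
\Pr\{W \geq 0\} = \Pr\!\left\{ \frac{W-\mu}{\sqrt{\sigma^2+\sigma^2_\S}} \geq \frac{-\mu}{\sqrt{\sigma^2+\sigma^2_\S}} \right\} = 1 - \Phi\!\left(\frac{-\mu}{\sqrt{\sigma^2+\sigma^2_\S}}\right) = \Phi\!\left(\frac{\mu}{\sqrt{\sigma^2+\sigma^2_\S}}\right),
\]
using the symmetry $1 - \Phi(-t) = \Phi(t)$. This is exactly \eqref{AP2a}.

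Since the Gaussian form of $W$ is assumed outright, there is no real obstacle in the argument itself; everything else is mean/variance accounting and a symmetry trick. The only place where care is warranted is the variance of the logistic \rv, which is why it is worth noting explicitly that $\sigma^2_\S = \pi^2/3$ rather than $1$ (a common pitfall when conflating logistic with standard normal).
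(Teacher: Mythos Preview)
Your proposal is correct and follows essentially the same route as the paper: invoke \Cref{O:logistic-threshold}, compute the mean and variance of $w\T X - Z$ by independence, and evaluate $1-F_V(0)$ under the normal assumption via the symmetry $1-\Phi(-t)=\Phi(t)$. The only cosmetic difference is that the paper writes the tail probability as $1-F_V(0)$ rather than explicitly standardizing, but the content is identical.
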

It is obtained as follows. We compute the mean and variance of $V = w\T X - Z$ as $\E [V] = \E [w\T X] - \E[Z]$ and $\Var [V] = \Var[w\T X] + \Var[Z]$. The probability $\Pr\{V > 0\}$ is given by $1-F_{V}(0)$, where $F_{V}$ is the cdf of $V$, which is assumed normal. 
Expressing $1-F_{V}(0)$ through the cdf of the standard normal distribution $\Phi$ we obtain~\cref{AP2a}.
The following variant is even simpler to compute.
\begin{proposition}\label{P:AP2b}
Assuming that $w\T X - Z$ has logistic distribution, we can approximate
\begin{align}\label{AP2b}
\tag{AP2b}
\E_X \S(w\T X) \approx \S(\mu/s),
\end{align}
where $s = (\sigma^2 / \sigma^2_\S + 1)^{\frac{1}{2}}$, $\mu = \E[w\T X]$, $\sigma^2 = \Var(w\T X)$.
\end{proposition}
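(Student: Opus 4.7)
The plan is to follow exactly the same template used for \cref{AP2a}, just with a logistic rather than Gaussian fit to the distribution of $V := w\T X - Z$. So the proof decomposes into three routine steps.

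First, I would invoke \cref{O:logistic-threshold} to rewrite the quantity of interest as a tail probability:
\begin{equation*}
\E_X \S(w\T X) = \Pr\{V \geq 0\}, \qquad V = w\T X - Z,
\end{equation*}
with $Z$ an independent logistic variable whose cdf is $\S$. By independence of $X$ and $Z$ and the fact that the standard logistic distribution has mean $0$ and variance $\sigma^2_\S = \pi^2/3$, the first two moments of $V$ are
\begin{equation*}
\E[V] = \E[w\T X] - \E[Z] = \mu, \qquad \Var[V] = \Var[w\T X] + \Var[Z] = \sigma^2 + \sigma^2_\S.
\end{equation*}

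Second, I would pass from moments to a full distributional assumption. A logistic distribution with location $m$ and scale $s'$ has mean $m$, variance $s'^2 \sigma^2_\S$, and cdf $F(v) = \S((v-m)/s')$. Imposing the logistic assumption on $V$ with matched first two moments forces $m = \mu$ and $s'^2 \sigma^2_\S = \sigma^2 + \sigma^2_\S$, that is $s' = (\sigma^2/\sigma^2_\S + 1)^{1/2} = s$. This identifies the parameterization named in the statement.

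Third, I would evaluate the tail probability under this logistic model and simplify using the identity $1 - \S(-t) = \S(t)$:
\begin{equation*}
\Pr\{V \geq 0\} = 1 - \S\bigl((0 - \mu)/s\bigr) = \S(\mu/s).
\end{equation*}
Combining with the rewrite from step one gives $\E_X \S(w\T X) \approx \S(\mu/s)$, as claimed. No step is a real obstacle — the derivation is mechanical once the logistic moment-matching is set up — so the only content worth emphasizing is why a logistic fit for $V$ is natural: $Z$ contributes a logistic smoothing kernel, and for bounded/random weights with growing input dimension the discrete component $w\T X$ tends to concentrate so that $V$ inherits a near-logistic shape dominated by $Z$, which makes the one-parameter matching in step two sensible.
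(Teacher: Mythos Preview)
Your proposal is correct and follows precisely the paper's own route: invoke \cref{O:logistic-threshold} to rewrite $\E_X \S(w\T X)$ as $\Pr\{V\geq 0\}$, match the first two moments of $V=w\T X-Z$ to a logistic distribution (the paper states this as ``the scale parameter $s$ is chosen so that the variance $s^2\sigma^2_\S$ matches that of $w\T X - Z$''), and evaluate the tail via $1-\S(-\mu/s)=\S(\mu/s)$. There is nothing to add or correct.
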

The scale parameter $s$ is chosen so that the variance of the standard logistic distribution $s^2 \sigma^2_\S$ matches that of $w\T X - Z$.
It is remarkable that~\eqref{AP2b} differs from~\eqref{AP1} only by the scale of the activation. However this scaling is dynamic: it depends on the network parameters and the input.
\par
Illustrations and comparison of accuracy of these approximations are given in~\cref{sec:pae}.
The variance of the logistic Bernoulli unit is defined by~\eqref{moments-binary}.
The logistic transform $Y=\S(X)$ happens to have exactly the same mean, because the mean of the Bernoulli distribution is its probability of drawing $1$. However, the variance of $Y=\S(X)$, illustrated in~\cref{fig:Heaviside}, is different and poses a separate challenge detailed in \cref{detail:logistict-var}.

\subsection{General Latent Variable Models}\label{sec:general}
Latent Variable Models, as in~\cref{sec:new-approx}, on one hand help us to compute expectations of some functions and on the other hand form a rich source of stochastic models. This allows to give a universal treatment to sigmoid belief networks, neural networks with uncertain inputs, networks with injected noise and possible combinations thereof.
%
%

We start from the latent variable representation of a sigmoid belief network layer $p(Y_j{=}1\mid X) = \S(w\T X)$, which reads as $Y_j = \leftbb w\T X + Z \geq 0 \rightbb$, $Z\sim \S$. It is straightforward to generalize~\cref{O:logistic-threshold} to other activations that have the form of a cdf, by considering the respective noise. More interestingly, different functions may be used in place of the thresholding function. Consider the following latent variable (injected noise) model:
\begin{align}\label{LVM}
X^k = f(W^k X^{k-1} - Z^{k}),
\end{align}
where $f \colon\Real \to \Real$ is applied component-wise and $Z^k_i$ is an independent real-valued \rv with a known distribution (such as the standard normal distribution). 

From representation~\eqref{LVM} we can reconstruct back the conditional cdf of the belief network $F_{X^k\mid X^{k-1}}(u) = \E_{Z^k} [W^k f(X^{k-1} - Z^k) \leq u ]$ and the respective conditional density.
Examples for the stochastic binary neuron $Y = \leftbb w\T X - Z\rightbb$ considered in~\cite{Williams1992} with general noise and the {\em stochastic rectifier} $Y = \max(w\T X - Z, 0)$ considered in~\cite{Bengio2013EstimatingOP} are given in~\cref{sec:cond-densities}. The conditional density may however be complicated and its explicit form is in fact not needed for our approximate inference. 
The moments of $f(W^k X^{k-1} - Z^{k})$ can be computed directly, provided that the components of $W^k X^{k-1} - Z^{k}$ are assumed to have an approximate distribution density $q$ such that the integral $\int_u q(u) f(u) \: du$ can be computed in closed form. For many non-linearities used in NNs, we can do so with either normal or logistic distribution, similarly to approximations~\cref{AP2a},~\cref{AP2b}.



\subsection{Softmax}
Softmax is the multinomial logistic model: $p(Y{=}y\mid X) = \exp(X_y)/ \sum_{i} \exp(X_i)$. 
Assuming there are $n$ classes, the posterior approximate distribution $q(Y)$ is specified by $n$ numbers. 
The expectation over $X$ in this case is more difficult since it is $n$-dimensional. Fortunately, it is expressible in the latent variable model (\cf~\cref{O:logistic-threshold}) using multidimensional noise:
\begin{align}\label{softmax-lvm}
\textstyle \E [p(Y{=}y\mid X)] = \Pr \{ U-Z \geq 0 \},
\end{align}
where $U$ is a \rv in $\Real^{n-1}$ with components $U_{k} = X_y - X_k$ for $k\neq y$ and $Z$ has an $n{-}1$-variate logistic distribution~\cite{Malik-73}. We make a simplification by using an i.i.d\onedot logistic model for $Z_k$ in~\eqref{softmax-lvm}. We can then build approximations by assuming that $U-Z$ is multivariate normal or multivariate logistic and evaluate the respective cdf instead of $\Pr \{ U-Z \geq 0 \}$. The logistic approximation gives the expression
\begin{align}
q(y) \approx \Big(  1+ \sum_{k\neq y}\exp\Big\{ \frac{\mu_k - \mu_y}{ \textstyle\sqrt{(\smash[b]{\sigma^2_k + \sigma^2_y})/\smash[b]{\sigma^2_\S+1}}  } \Big\}  \Big)^{-1}.
\end{align}
A drastically simplified approximation, which we use in the end-to-end training, reduces to the {\rm softmax} of $\mu_k/\sqrt{\sigma_k^2/\sigma_S^2+1}$.
%
See the derivations in~\cref{detail:softmax}.
When the input variances are zero, both approximations recover the standard softmax function.
%

\subsection{End-to-end Training}\label{sec:different-layers}

Let $q(X^l\mid x^0; \theta)$ denote the output probabilities of our model. We consider standard learning optimization objectives. For classification, we maximize the conditional likelihood of the training data, \ie, we minimize
\begin{align}\label{log-likelihood}
L = \E_{(x^0, x^{l*})\sim{\rm data}} \big[{-}\log q(X^l=x^{l*}\mid x^{0}; \theta)\big] .
\end{align}

\paragraph{Continuous Differentiability}
The derivative w.r.t.~parameters is obtained by back-propagation expanding derivatives in both mean and variance dependencies. We observe that all propagation equations based on computing expectations \wrt Gaussian or logistic distribution are continuously differentiable assuming non-zero variance. In order to allow for learning with hard non-linearities, such as the Heaviside function, it is sufficient to assume that each input instance has some uncertainty 
or that the first layer is stochastic. 
Typically used heuristics such as replacing the step function with identity in the backward pass can be avoided.


\paragraph{Gradients}
In the approximations for different non-linear transforms such as~\cref{AP2b} (full list in \cref{sec:approximations}), the mean is always divided by the standard deviation whenever it occurs in saturating functions such as $\Phi,\phi,\S$.
When hidden units are uncertain, the variance is larger and all non-linearities automatically become smoother. The automatic scaling by uncertainty allows the gradient to be propagated deeper through the network. The space of parameters where $\mu/\sigma$ is small, as opposed to where $\mu$ is small in standard networks, and the gradients do not vanish is different and may be larger. 
%
\section{Experiments}\label{sec:experiments}
\paragraph{Implementation}
Our implementation (in pytorch) will be made available upon acceptance. Important for our experiments, the implementation is modular: with each of the standard layers we can do 3 kinds of propagation: {\em AP1}: standard propagation in deterministic layers and taking the mean in stochastic layers (\eg, in dropout we need to multiply by the Bernoulli probability), {\em AP2}: proposed propagation rules with variances and {\em sample}: by drawing samples of any encountered stochasticity (such as sampling from Bernoulli distribution in dropout). The last method is also essential for computing Monte Carlo (MC) estimates of the statistics we are trying to approximate. When the training method is sample, the test method is assumed to be AP1, which matches the standard practice of dropout training. Details of the implementation and models are given in~\cref{sec:extraexp}.
%
%
\begin{table}[b]
\centering
\setlength{\tabcolsep}{3pt}
\resizebox{\linewidth}{!}{
\begin{tabular}{cc}
\begin{tabular}{|c|c|cc|cc|cl|}
\hline
& CA%
& C%
& A%
& C%
& A%
& F%
& Softmax\\%
\hline
\multicolumn{8}{|c|}{LReLU, Noisy input $\N(0,10^{-4})$}\\
\hline
$\epsilon_{\mu_1}$ & 0.02 & 0.04 & 0.04 & 0.06 & 0.05 & 0.08 & KL 1.4e-6\\
\hline
$\epsilon_{\mu_2}$ & 0.02 & 0.02 & 0.02 & 0.03 & 0.02 & 0.03 & KL 4.5e-7 \\
$\epsilon_{\sigma_2}$ & 1.01 & 0.88 & 0.87 & 0.64 & 0.63 & 0.63 & KL' 3.8e-7\\
\hline
\multicolumn{8}{|c|}{LReLU, Noisy input $\N(0,0.01)$}\\
\hline
$\epsilon_{\mu_1}$ & 0.14 & 0.29 & 0.24 & 0.60 & 0.45 & 0.67 & KL 0.03\\
$\epsilon_{\mu_2}$ & 0.02 & 0.03 & 0.03 & 0.05 & 0.05 & 0.08 & KL 0.003 \\
$\epsilon_{\sigma_2}$          & 1.07 & 0.91 & 0.91 & 0.68 & 0.58 & 0.69 & KL' 0.002\\
\hline
\multicolumn{8}{|c|}{LReLU, Noisy input $\N(0,0.1)$}\\
\hline
$\epsilon_{\mu_1}$ & 0.31 & 0.74 & 0.51 & 1.47 & 1.11 & 1.60 & KL 5.09\\
\hline
$\epsilon_{\mu_2}$ & 0.03 & 0.04 & 0.05 & 0.06 & 0.07 & 0.10 & KL 0.07 \\
$\epsilon_{\sigma_2}$          & 1.12 & 0.99 & 1.09 & 0.75 & 0.65 & 0.79 & KL' 0.03\\
\hline
\end{tabular}
\begin{tabular}{|c|c|cc|cc|cl|}
\hline
& CA%
& C%
& A%
& C%
& A%
& F%
& Softmax\\%
\hline
\multicolumn{8}{|c|}{LReLU, Dropout (0.2)}\\	
\hline
$\epsilon_{\mu_1}$   & 0.01 & 0.02 & 0.04 & 0.17 & 0.06 & 0.10 & KL  0.02\\
$\epsilon_{\mu_2}$ & 0.01 & 0.02 & 0.01 & 0.03 & 0.02 & 0.03 & KL  9.5e-3 \\
$\epsilon_{\sigma_2}$          & 1.00 & 1.00 & 1.01 & 0.95 & 0.98 & 0.95 & KL' 2.9e-3 \\
\hline
\multicolumn{8}{|c|}{Logistic Bernoulli}\\
\hline
$\epsilon_{\mu_1}$   & 0.02 & 0.02 & 0.13 & 0.20 & 0.20 & 0.17 & KL 6.0 \\
$\epsilon_{\mu_2}$ & 0.02 & 0.02 & 0.03 & 0.04 & 0.08 & 0.07 & KL  0.06 \\
$\epsilon_{\sigma_2}$          & 1.09 & 1.00 & 1.00 & 0.69 & 0.95 & 0.93 & KL' 1.2e-2 \\
\hline
\multicolumn{8}{|c|}{Logistic Transform, Noisy input $\N(0,0.1)$}\\
\hline
$\epsilon_{\mu_1}$   & 0.06 & 0.17 & 0.17 & 0.40 & 0.38 & 0.53 & KL 0.01 \\
$\epsilon_{\mu_2}$ & 0.03 & 0.05 & 0.05 & 0.12 & 0.12 & 0.17 & KL  2.6e-3 \\
$\epsilon_{\sigma_2}$          & 1.00 & 1.10 & 1.09 & 0.72 & 0.72 & 0.90 & KL' 1.4e-2 \\
\hline
\end{tabular}
\end{tabular}
}
\caption{\label{tab:accuraacy-LeNet}
Accuracy of approximation for LeNet/MNIST Model per layer. Columns: C - conv layer, A - activation, F - fully connected. In the case of dropout, it is included in the activation. Rows: $\epsilon_{\mu_1}$ - error of approximation of the mean by the standard method (AP1), $\epsilon_{\mu_2}, \epsilon_{\sigma_2}$ errors of approximation by the proposed method (AP2). Errors in $\mu$ are relative to average MC $\sigma^*$, errors in $\sigma$ show the multiplicative factor of $\sigma^*$ (see text). In the final layer we show KL divergence of the class posterior from MC class posterior. For AP2 two versions are evaluated: using simplified approximation of softmax and full approximation of softmax (KL').
}
\end{table}
\paragraph{Approximation Accuracy}\label{sec:exp-accuracy}
\cref{tab:accuraacy-LeNet,tab:accuracy-CIFAR} report approximation accuracy per layer in LeNet and CIFAR networks for different use cases.
All models with {\tt LReLU} use $\alpha=0.01$. We have computed MC statistics $\mu^*, \sigma^*$ per unit in each layer.
The error measure of the means $\epsilon_\mu$ is the average $|\mu-\mu^*|$ relative to the average $\sigma^*$. The averages are taken over all units in the layer and over input images. The error of the standard deviation $\epsilon_\sigma$ is the geometric mean of $\sigma/\sigma^*$, representing the error as a factor from the true value (\eg, $1.0$ is exact, $0.9$ is under-estimating and $1.1$ is over-estimating). MC estimates are using $10^3$ samples, which was sufficient to compute 2 significant digits as reported.

The following can be observed from the results in~\cref{tab:accuraacy-LeNet,tab:accuracy-CIFAR}.
The propagation of the input uncertainty works reasonably well for LeNet network (4 pairs of linear and activation layers) but degrades with more layers as seen for CIFAR network in~\cref{tab:accuracy-CIFAR}. This is to be expected since the errors of the approximation accumulate. The main contribution to the loss of accuracy comes from the poor approximation of the variance in the convolutional layers, where we  ignored dependencies. This is clearly seen in the case of small input noise where the accuracy in $\sigma$ drops significantly after the convolutional layers. It means that the inputs are positively correlated on average so that ignoring this correlation underestimates the variance.

Differently from propagating input uncertainty through deep networks, in the case of dropout and Bernoulli models the uncertainty created by a layer appears to dominate the uncertainty propagated from the preceding layers and thus the estimation does not degrade.
%
%
%
%
%
%
%
\begin{table}[t]
\centering
\setlength{\tabcolsep}{3pt}
\resizebox{\linewidth}{!}{
\begin{tabular}{|c|cc|cc|cc|cc|cc|cc|cc|cc|cclc|}
\hline
& C%
& A%
& C%
& A%
& C%
& A%
& C%
& A%
& C%
& A%
& C%
& A%
& C%
& A%
& C%
& A%
& C%
& P%
& \multicolumn{2}{c|}{Softmax}\\%
\hline
\multicolumn{21}{|c|}{LReLU, Noisy input $\N(0,0.01)$}\\
\hline
$\epsilon_{\mu_1}$ & 0.02 & 0.24 & 0.39 & 0.37 & 0.76 & 0.63 & 1.13 & 0.85 & 1.59 & 1.10 & 1.93 & 1.32 & 2.65 & 1.86 & 2.76 & 1.93 & 1.99 & 2.59 & 
KL & 4.42 \\
\hline
$\epsilon_{\mu_2}$ & 0.02 & 0.02 & 0.03 & 0.19 & 0.49 & 0.36 & 0.55 & 0.42 & 0.73 & 0.55 & 0.96 & 0.68 & 1.26 & 0.97 & 1.27 & 1.01 & 1.14 & 1.43 & 
KL & 1.71 \\ 
$\epsilon_{\sigma_2}$ & 1.00 & 1.11 & 0.59 & 0.40 & 0.37 & 0.28 & 0.35 & 0.19 & 0.26 & 0.10 & 0.17 & 0.05 & 0.11 & 0.05 & 0.08 & 0.04 & 0.07 & 0.07 & 
KL' & 1.71\\
\hline
\multicolumn{21}{|c|}{LReLU, Dropout (0.2)}\\
\hline
$\epsilon_{\mu_1}$ & - & 0.01 & 0.02 & 0.04 & 0.12 & 0.07 & 0.13 & 0.09 & 0.17 & 0.11 & 0.21 & 0.14 & 0.26 & 0.14 & 0.27 & 0.16 & 0.23 & 0.32 & 
KL & 0.05 \\
\hline
$\epsilon_{\mu_2}$ & - & 0.01 & 0.02 & 0.01 & 0.02 & 0.02 & 0.03 & 0.02 & 0.04 & 0.03 & 0.05 & 0.04 & 0.08 & 0.05 & 0.08 & 0.07 & 0.10 & 0.15 & 
KL & 0.02 \\ 
$\epsilon_{\sigma_2}$ & - & 1.00 & 1.00 & 1.02 & 0.93 & 0.95 & 0.90 & 0.92 & 0.81 & 0.79 & 0.73 & 0.69 & 0.71 & 0.69 & 0.71 & 0.65 & 0.56 & 0.69 & 
KL' & 0.02\\
\hline
\multicolumn{21}{|c|}{Logistic Bernoulli}\\
\hline
$\epsilon_{\mu_1}$ & - & 0.02 & 0.02 & 0.04 & 0.12 & 0.10 & 0.18 & 0.15 & 0.29 & 0.21 & 0.37 & 0.25 & 0.48 & 0.31 & 0.44 & 0.33 & 0.27 & 0.47 & 
KL & 3.46 \\
\hline
$\epsilon_{\mu_2}$ & - & 0.02 & 0.02 & 0.03 & 0.04 & 0.04 & 0.05 & 0.06 & 0.12 & 0.10 & 0.20 & 0.14 & 0.30 & 0.19 & 0.28 & 0.19 & 0.14 & 0.25 & 
KL & 0.21 \\ 
$\epsilon_{\sigma_2}$ & - & 1.12 & 1.00 & 1.02 & 0.92 & 0.99 & 0.78 & 0.97 & 0.59 & 0.95 & 0.56 & 0.94 & 0.61 & 0.94 & 0.73 & 0.96 & 0.74 & 0.99 & 
KL' & 0.1\\
\hline
\multicolumn{21}{|c|}{Logistic Transform, Noisy input $\N(0,0.01)$}\\
\hline
$\epsilon_{\mu_1}$ & 0.02 & 0.17 & 0.17 & 0.28 & 0.49 & 0.53 & 0.98 & 0.96 & 1.47 & 1.39 & 1.62 & 1.52 & 2.13 & 2.06 & 2.36 & 2.31 & 1.86 & 2.55 & 
KL & 3.46 \\
\hline
$\epsilon_{\mu_2}$ & 0.02 & 0.03 & 0.03 & 0.16 & 0.40 & 0.43 & 0.82 & 0.81 & 1.26 & 1.21 & 1.39 & 1.32 & 1.88 & 1.84 & 2.05 & 2.03 & 1.60 & 2.20 & 
KL & 2.68 \\ 
$\epsilon_{\sigma_2}$ & 1.00 & 1.01 & 0.40 & 0.41 & 0.28 & 0.29 & 0.23 & 0.23 & 0.12 & 0.11 & 0.05 & 0.05 & 0.02 & 0.02 & 0.01 & 0.01 & 0.01 & 0.01 & 
KL' & 2.68 \\
\hline
\end{tabular}
}
\caption{\label{tab:accuracy-CIFAR}
Accuracy of approximation for CIFAR-10 Model. The notation is the same as in~\cref{tab:accuraacy-LeNet}, the row ``P'' in the head of the networks is the average pooling layer.
}
\end{table}
\begin{table}[t]
\centering
\setlength{\tabcolsep}{3pt}
\resizebox{0.85\linewidth}{!}{
\begin{tabular}{cc}
\begin{tabular}{|c|cc|cc|cc|c|}
\multicolumn{8}{c}{LReLU}\\
\hline
& C%
& A%
& C%
& A%
& C%
& A%
& F\\%
\hline
\multicolumn{8}{|c|}{Init=None}\\
\hline
$\mu$ & 0.10 & 0.27 & 0.23 & 0.22 & 0.39 & 0.33 & 0.41\\
$\sigma$          & 0.89 & 1.32 & 0.85 & 1.01 & 0.92 & 1.03 & 0.93\\
\hline
\multicolumn{8}{|c|}{Init=BN}\\
\hline
$\mu$ & 0.10 & 0.49 & 0.13 & 0.36 & 0.29 & 0.52 & 0.49\\
$\sigma$ & 0.89 & 1.20 & 1.04 & 1.22 & 1.28 & 1.47 & 1.49\\
\hline
\multicolumn{8}{|c|}{Init=AP2}\\
\hline
$\mu$    & 0.10 & 0.20 & 0.26 & 0.25 & 0.38 & 0.37 & 0.38\\
$\sigma$ & 0.89 & 1.23 & 0.92 & 1.17 & 1.08 & 1.37 & 1.21\\
\hline
\end{tabular}
&
\begin{tabular}{|c|cc|cc|cc|c|}
\multicolumn{8}{c}{Logistic Transform}\\
\hline
& C%
& A%
& C%
& A%
& C%
& A%
& F\\%
\hline
\multicolumn{8}{|c|}{Init=None}\\
\hline
$\mu$             & 0.10 & 0.09 & 0.24 & 0.24 & 0.36 & 0.36 & 0.27\\
$\sigma$          & 0.89 & 0.91 & 0.76 & 0.76 & 0.76 & 0.77 & 0.75\\
\hline
\multicolumn{8}{|c|}{Init=BN}\\
\hline
$\mu$ & 0.10 & 0.04 & 0.20 & 0.19 & 0.34 & 0.34 & 0.25\\
$\sigma$          & 0.93 & 1.06 & 1.02 & 1.10 & 1.15 & 1.16 & 1.16\\
\hline
\multicolumn{8}{|c|}{Init=AP2}\\
\hline
$\mu$             & 0.10 & 0.04 & 0.17 & 0.15 & 0.37 & 0.37 & 0.28\\
$\sigma$          & 0.89 & 1.15 & 1.02 & 1.13 & 1.16 & 1.22 & 1.21\\
\hline
\end{tabular}
\end{tabular}
}
\caption{\label{tab:data-acc-LeNet}
Accuracy of dataset statistics for LeNet/MNIST, estimated by AP2 model with a single-pixel input $(\mu_0, \sigma_0^2)$. 
The notation is the same as in~\cref{tab:accuraacy-LeNet}.
}
\end{table}
%
%
%
\paragraph{Dataset Statistics and Analytic Normalization}
\cref{tab:data-acc-LeNet} shows the accuracy of estimating neuron statistics over the dataset using the proposed technique.
In convolutional networks, the task is to estimate the mean and variance $\mu^*, \sigma^*$ per {\em channel} in each layer, \ie, the statistics are over the input dataset and the spatial dimensions. With our method, the estimates are computed by propagating through the network the statistics of the input dataset $\mu^*_0, \sigma^*_0$ (that obviously do not depend on the network). The propagation works directly with spatial averages, the batch dimension and the spatial dimensions are not used and the only relevant dimension is channels (see details of this efficient implementation in~\cite{shekhovtsov-18-norm}).
The reported errors in estimating the statistics are averaged over the channels.
We study these errors for three cases: randomly initialized networks, networks re-initialized with batch normalization (BN)~\cite{IoffeS15} as described \eg in~\cite{Salimans2016WeightNA} and our analytic normalization~\cite{shekhovtsov-18-norm}. The re-initialization consists of recurrently going through the layers, applying the normalization and estimating the statistics of the next layer. It is clearly seen that the accuracy of the analytic normalization is completely sufficient for the purpose of network initialization and normalization~\cite{shekhovtsov-18-norm} (i.e.~the true variance is close to one and the deviation from the true mean is less than the true standard deviation). This normalization is computationally cheap, continuously differentiable and is applicable to training of standard networks as well as variance-propagating networks. In comparison to BN, it however lacks additional generalization properties~\cite{shekhovtsov-18-norm}.
%
%
\paragraph{Analytic Dropout}
In this experiment we demonstrate the utility of using our approximation during training. \cref{fig:dropout-CIFAR} shows a comparison of plain training, dropout~\cite{srivastava14a}, which samples multiplicative Bernoulli noise during training, and analytic dropout, in which our propagation is used. The dropout layers are applied after every activation and there is no input dropout.
All methods start from the same BN-initialized point and use the same learning rate and schedule ($0.001\cdot 0.96^k$ in epoch $k$) and the same optimizer (Adam~\cite{KingmaB14}). We intentionally do not use batch normalization during training, since it has a regularization effect of its own~\cite{IoffeS15} that needs to be studied separately. With the analytic propagation we show two results: initialized the same way as the baseline and using~\cite{shekhovtsov-18-norm}.
%
This comparison is limited but it shows that AP2 propagation significantly improves validation error in a deep network while not slowing the training down, which qualitatively agrees with the findings of~\cite{wang2013fast} for smaller networks.
\begin{figure}[b]
\centering
\setlength{\tabcolsep}{0pt}
\begin{tabular}{cc}
\includegraphics[width=0.49\linewidth]{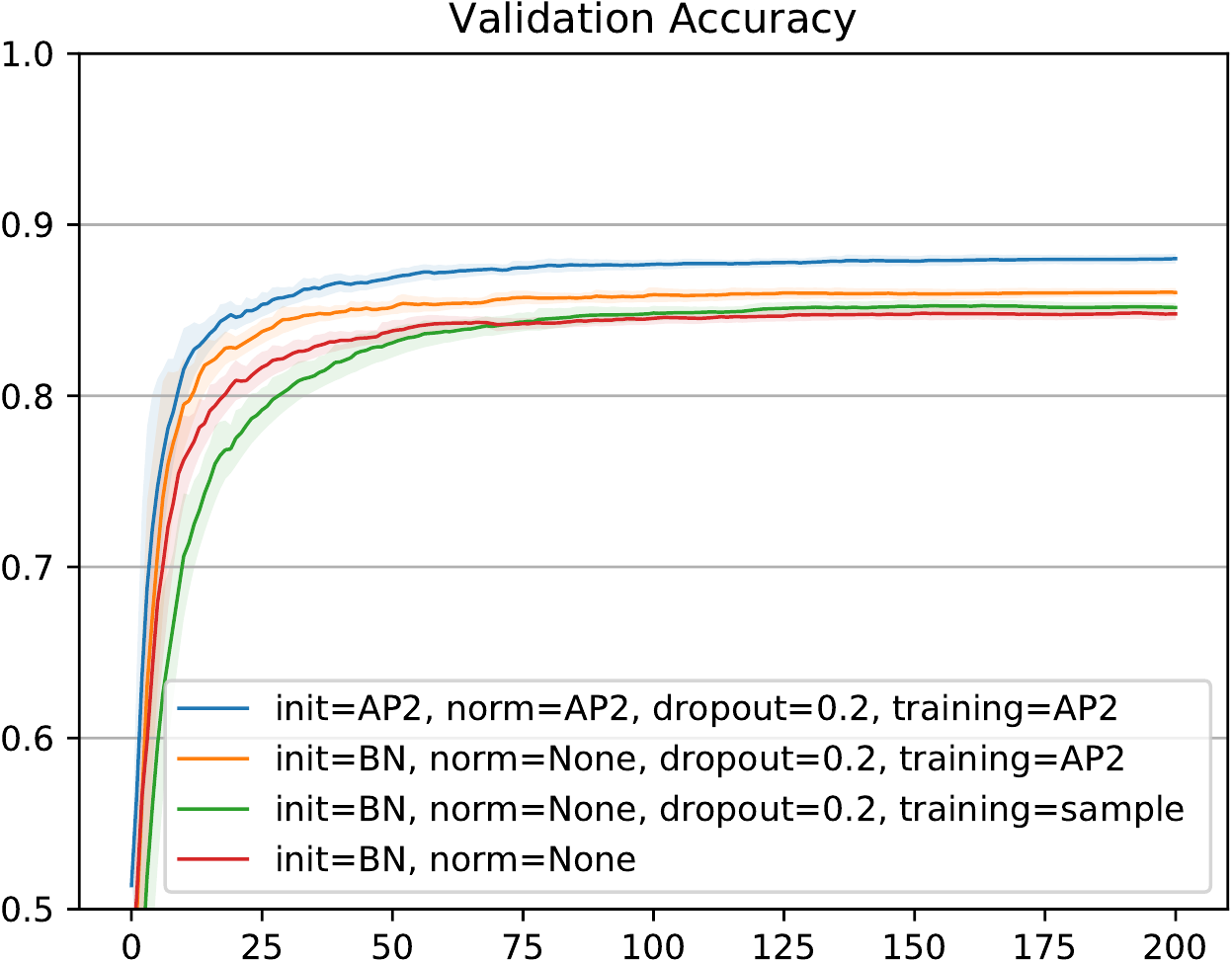}&
~\includegraphics[width=0.49\linewidth]{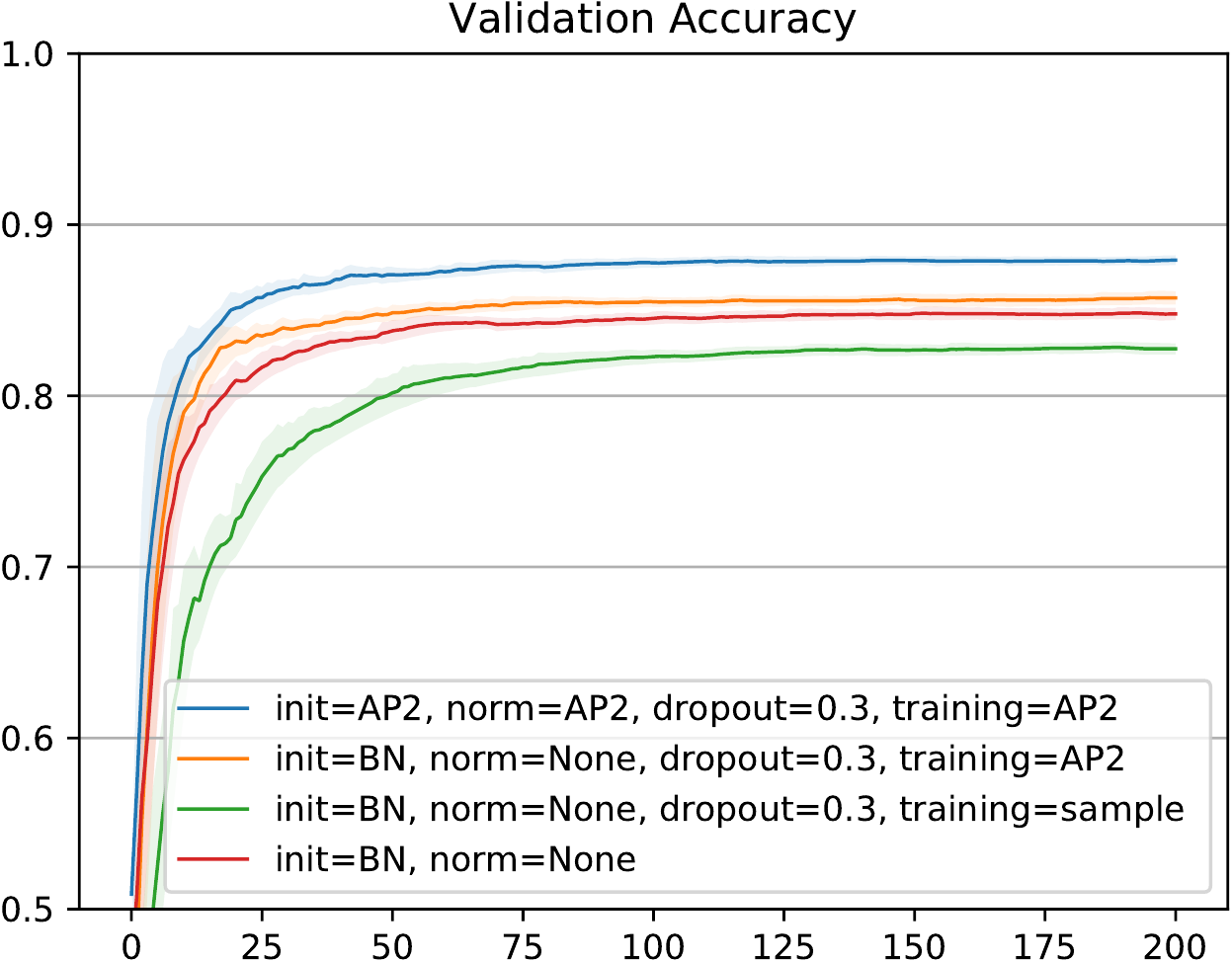}
\end{tabular}
\caption{\label{fig:dropout-CIFAR}
CIFAR-10: Comparison of plain training, dropout and analytic dropout for drop probabilities 0.2 and 0.3. For the analytic dropout (training=AP2) we tested two variants: with the same initialization as the baseline and with AP2 initialization and normalization.
The plots show validation accuracies during training epochs (running average). Notice that dropout (green) fails to improve the validation accuracy compared to standard training (red) for higher drop probability within the training schedule. The proposed analytic dropout clearly improves the validation accuracy without a noticeable slow down.
%
}
\end{figure}
\paragraph{Stability}
We made a conjecture that propagating uncertainty may improve stability of the predictions under noise or adversarial attacks. 
The idea can be demonstrated on a simple NN with one hidden layer of 100 units. This simple model trained on the MNIST dataset reveals quite surprising results. We compared training of a standard NN with sigmoid activations and a model with Bernoulli-logistic activations trained using AP2, assuming input noise with variance $0.1$. The results in~\cref{fig:stability-MLP} show that the latter model, while achieving the same test accuracy, is significantly more stable to Gaussian noise. The same dependance is observed for gradient sign attack~\cite{Goodfellow-15-adversarial} shown in~\cref{fig:stability-MLP-adv}.
The shown stability results do not immediately scale to deep networks. We see two reasons for this. First, the approximation quality of propagating input uncertainty degrades with depth as we have seen above. Second, the stability depends on both the propagation method and the choice of parameters. While propagating variance can deliver a more stable classifier, the choice of parameters is still crucial. In particular, parameters may exist such that the network posterior is always deterministic regardless of the input uncertainty. In this case, propagating the variance is useless. 
These issues need to be addressed in the future work.
\begin{figure}[t]
\centering
\includegraphics[width=0.49\linewidth]{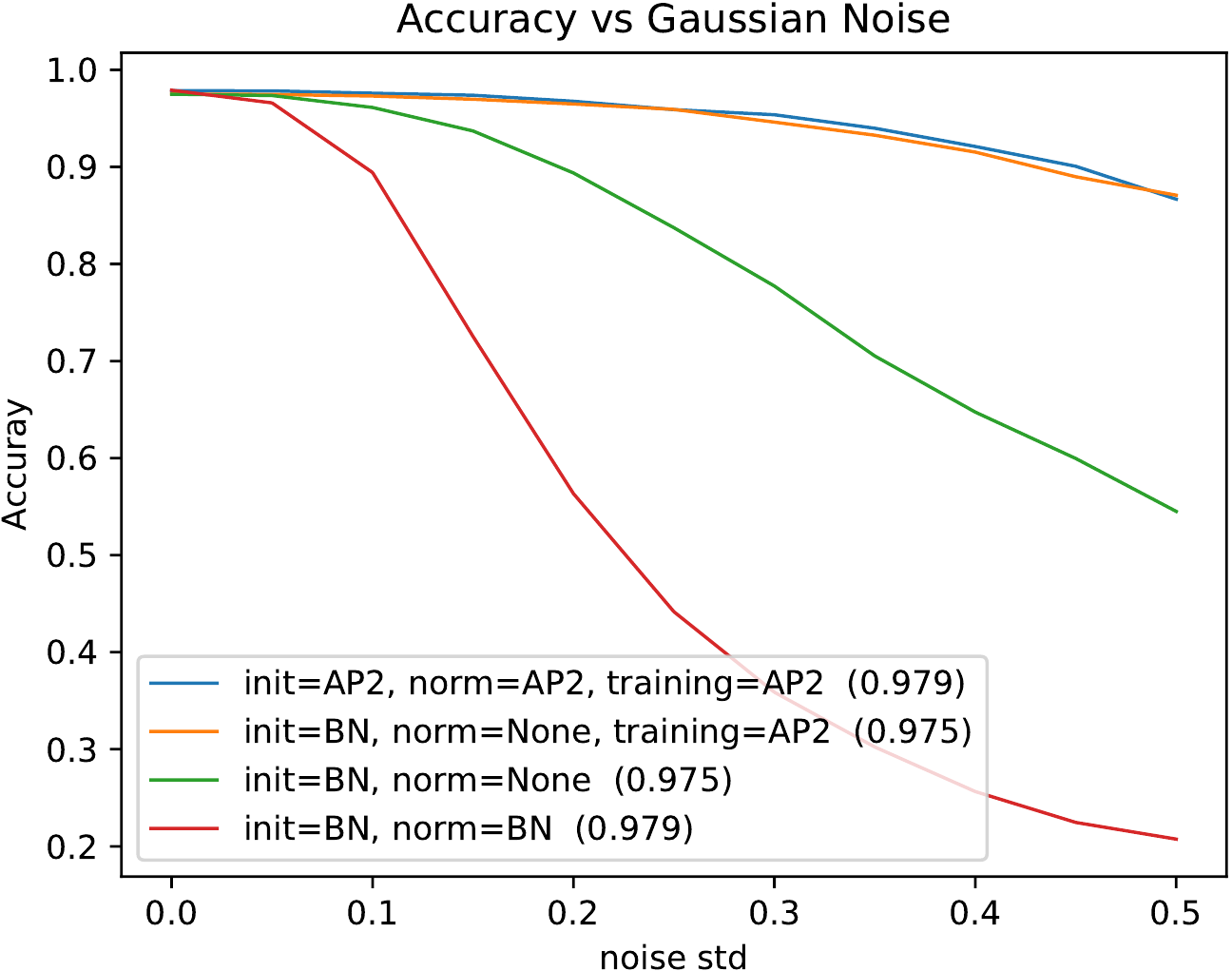}\ %
\includegraphics[width=0.49\linewidth]{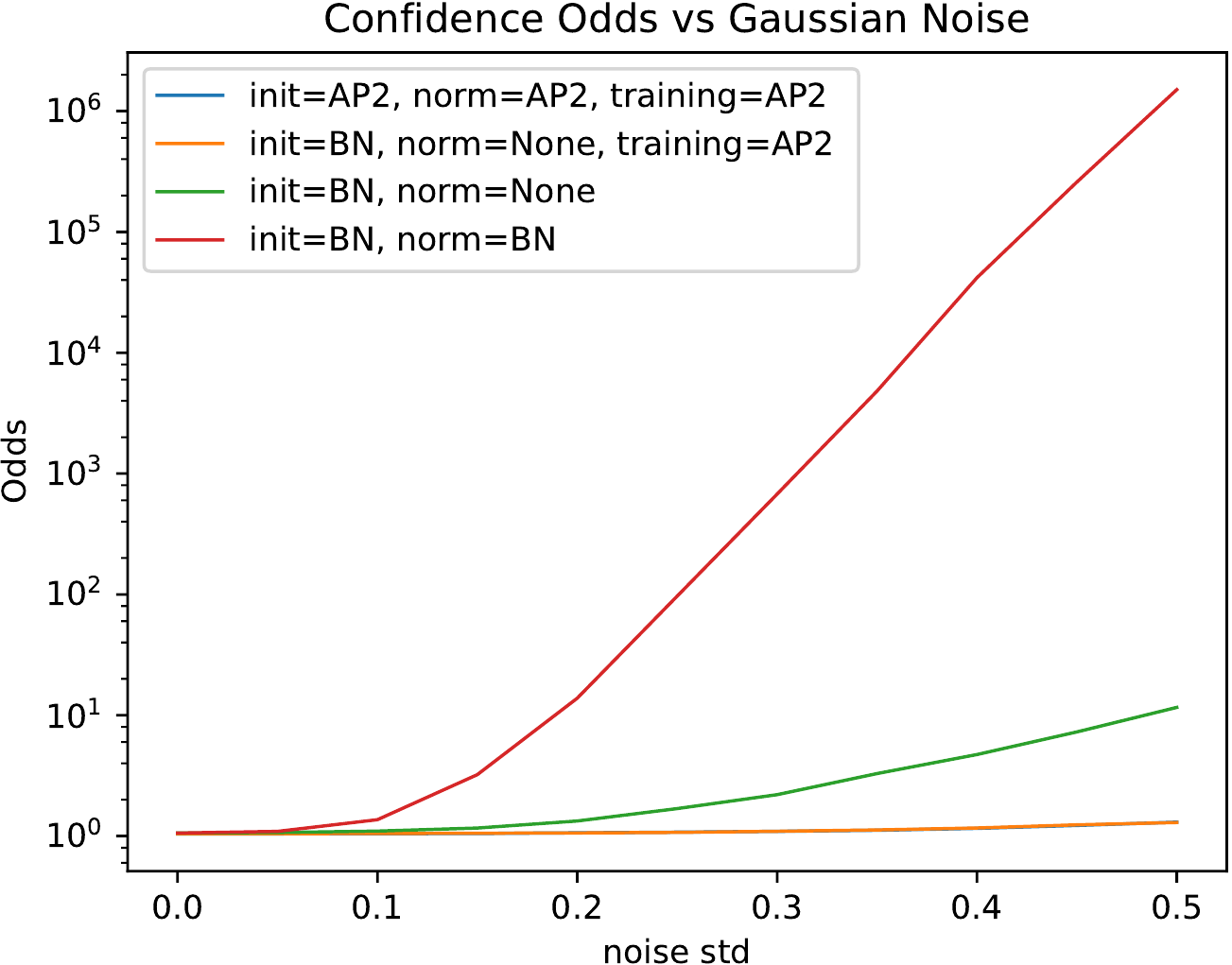}%
\caption{\label{fig:stability-MLP}
Stability evaluation of MLP/MNIST model under Gaussian noise.
All networks have similar test accuracy in the noise-free scenario (numbers in brackets). Bernoulli model with analytic propagation shows a better stability \wrt random perturbations of the input. Batch normalization achieves a lower training loss than the standard NN which leads to a significant drop in the stability. When the network trained with BN makes errors under noise it does so with a very high confidence odds -- the ratio of the predicted class probability over the true one: $\max_y q(y) / q(y^*)$.
}
\end{figure}

\section{Conclusion}
We have described an inference method which lies between feed-forward neural networks and iterative inference methods such as variational methods or belief propagation in Bayesian networks. We have build a framework of variance-propagating layers, extending constructive elements of standard NNs, in which a range of models can be considered with deterministic and stochastic units and used in end-to-end learning.
The feed-forward structure is one one hand restrictive, because we can only perform inference in one direction. On the other hand, it allows dealing with uncertainties in NNs and opens a number of possibilities with practical benefits. The quality of the approximation of posterior probabilities can be measured. The accuracy is sufficient for several use cases such as sigmoid belief nets, dropout training and normalization techniques. It may be insufficient for propagating input uncertainties through a deep network, but we believe that a calibration will be possible. Further applications may include generative and semi-supervised learning (VAE) and Bayesian model estimation.
%
%
%
%
%
%
%
%
\section*{Acknowledgment}
A.~Shekhovtsov was supported by Toyota Motor Europe HS and Czech Science Foundation grant 18-25383S. B.~Flach was supported by Czech Science Foundation grant 16-05872S.
{\small
\renewcommand{\bibname}{\protect\leftline{References\vspace{-2ex}}}
\bibliographystyle{splncsnat}
\bibliography{../bib/strings,../bib/neuro-generative,../bib/our}
}

\newpage
\appendix
\numberwithin{figure}{section}
\addtocontents{toc}{\protect\setcounter{tocdepth}{2}}
\pagestyle{plain}
%
\setcounter{figure}{0}
\setcounter{table}{0}
\counterwithin{figure}{section}
\counterwithin{table}{section}
\counterwithin{theorem}{section}
\counterwithin{proposition}{section}
\counterwithin{lemma}{section}
%
\title{Appendix}
\author{}
\institute{}
\titlerunning{\,}
\authorrunning{\,}
\maketitle
%
%
%
%
\let\Contentsline\contentsline
\renewcommand\contentsline[3]{\Contentsline{#1}{#2}{}}
\makeatletter
\renewcommand{\@dotsep}{10000}
\makeatother
\def\authcount#1{}
\tableofcontents
%
\section{Used Facts on Approximate Marginalization}\label{sec:proofs}
\begin{lemma}\label{PMarginals} Let $X$ be a \rv with components $X_i$ and pdf $p(X)$. The closest factorized approximation $q(X) = \prod_i q(X_i)$ to $p(X)$ in terms of forward KL divergence is given by the marginals $p(x_i) = \sum_{(x_j \mid j\neq i)}p(x)$.
\end{lemma}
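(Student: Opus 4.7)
The plan is to expand the forward KL divergence, drop the term that does not depend on $q$, and exploit the factorization $q(X) = \prod_i q(X_i)$ to decouple the minimization across components. Specifically, I would write
\begin{equation*}
KL(p \| q) = \sum_x p(x) \log p(x) - \sum_x p(x) \log q(x),
\end{equation*}
note that the first term is a constant (the negative entropy of $p$), and focus on maximizing $\sum_x p(x) \log q(x)$ over factorized $q$.

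Next I would substitute $\log q(x) = \sum_i \log q(x_i)$ and swap the order of summation. In each term indexed by $i$, the sum $\sum_x p(x) \log q(x_i)$ collapses, after summing out $x_j$ for $j\neq i$, to $\sum_{x_i} p(x_i) \log q(x_i)$, where $p(x_i)$ is exactly the marginal. Hence the problem decouples into $n$ independent minimizations of the form
\begin{equation*}
\min_{q(x_i)} \; -\sum_{x_i} p(x_i) \log q(x_i), \qquad \text{subject to } \sum_{x_i} q(x_i) = 1.
\end{equation*}

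Finally I would invoke Gibbs' inequality (equivalently, non-negativity of KL applied marginally) to conclude that each term is minimized, with the minimum attained uniquely at $q(x_i) = p(x_i)$. Summing these per-component optimal choices yields the claimed product-of-marginals form. For the continuous case the argument is identical with integrals replacing sums, assuming sufficient regularity so that Fubini applies to the interchange of integration order.

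The argument is standard and I do not anticipate a genuine obstacle; the only care point is the decoupling step, where one must check that the component-wise minimizations can be performed independently. This is justified because the constraints $\sum_{x_i} q(x_i)=1$ act separately on each factor, so there is no coupling between the subproblems through either the objective or the feasible set.
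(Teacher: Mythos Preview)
Your proposal is correct and follows essentially the same route as the paper: drop the entropy term, use the factorization to split the cross-entropy into per-coordinate terms $\sum_i \sum_{x_i} p(x_i)\log q(x_i)$, and optimize each independently. You are slightly more explicit than the paper in invoking Gibbs' inequality for the final step and in noting the continuous variant, but the argument is the same.
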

\begin{proof}
Minimizing 
\begin{align}
KL(p(X) \| q(X) ) = \E_{p(X)} \log \frac{p(X)}{q(X)},
\end{align}
in $q$ is equivalent to maximizing
\begin{align}\label{crossent}
\E_{p(X)} \log {q(X)}.
\end{align}
Assuming $q(X) = \prod_i q(X_i)$,
the negative cross-entropy above expresses as
\begin{align}
 \sum_{x} p(x)\sum_{i}\log q(x_i) = \sum_{i} \sum_{x_i} p(x_i) \log q(x_i),
\end{align}
which is maximum when $q(x_i) = p(x_i)$.
\end{proof}

\begin{lemma}\label{GMarginals}
Let $X$ be a continuous \rv. The closest approximation to $p(X)$ by a Gaussian $q(x) = \frac{1}{\sqrt{2\pi}\sigma}\exp(-\frac{(x-\mu)^2}{2\sigma^2})$ in forward KL divergence is given by moment matching: $\mu = \E[X]$, $\sigma^2 = \E[X_2-\mu]$.
\end{lemma}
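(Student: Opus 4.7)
\begin{proofsketch}
The plan is to reduce the problem to two one-dimensional optimizations, exactly as in the proof of \Lemma{PMarginals}. First I would observe that $KL(p \| q) = \E_{p(X)}[\log p(X)] - \E_{p(X)}[\log q(X)]$, so since the first term does not depend on the parameters $(\mu,\sigma)$ of $q$, minimizing the KL divergence is equivalent to maximizing $\E_{p(X)}[\log q(X)]$.

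Next I would substitute the explicit Gaussian form
\begin{align*}
\log q(x) = -\tfrac{1}{2}\log(2\pi) - \log\sigma - \frac{(x-\mu)^2}{2\sigma^2},
\end{align*}
take the expectation under $p$, and use the decomposition $\E_{p(X)}[(X-\mu)^2] = \Var_p[X] + (\E_p[X]-\mu)^2$ to get
\begin{align*}
\E_{p(X)}[\log q(X)] = -\tfrac{1}{2}\log(2\pi) - \log\sigma - \frac{\Var_p[X] + (\E_p[X]-\mu)^2}{2\sigma^2}.
\end{align*}

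Now the optimization decouples: the only $\mu$-dependence is through $(\E_p[X]-\mu)^2$, which is minimized (hence the whole expression maximized) at $\mu^* = \E_p[X]$. Plugging this back leaves a function of $\sigma$ alone, namely $-\log\sigma - \Var_p[X]/(2\sigma^2)$. Differentiating in $\sigma$ (or, cleaner, in $\tau = \sigma^2$) and setting the derivative to zero yields $\sigma^{*2} = \Var_p[X] = \E_p[(X-\mu^*)^2]$, which matches the stated moment-matching conditions.

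There is no real obstacle here; the only thing to be careful about is verifying that the critical point is indeed a maximum of $\E_p[\log q]$ (equivalently, a minimum of the KL). This follows because the objective in $(\mu,\tau)$ is strictly concave on the relevant region (the $\mu$-part is a downward parabola for any fixed $\tau>0$, and the $\tau$-part $-\tfrac{1}{2}\log\tau - c/(2\tau)$ with $c=\Var_p[X]>0$ has negative second derivative at the critical point), so the unique stationary point is the global maximum, provided $\Var_p[X]>0$; the degenerate case $\Var_p[X]=0$ is handled as a limiting argument with $p$ a point mass.
\end{proofsketch}
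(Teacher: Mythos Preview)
Your proposal is correct and follows essentially the same route as the paper: reduce minimizing $KL(p\|q)$ to maximizing the cross-entropy $\E_{p(X)}[\log q(X)]$, plug in the Gaussian log-density, and solve for the optimal $\mu$ and $\sigma^2$. The only cosmetic difference is that you first use the bias--variance decomposition to decouple the two parameters before optimizing, whereas the paper simply differentiates in $\mu$ and $\sigma$ directly and solves the first-order conditions; your added remark verifying that the stationary point is a genuine maximum is a nice touch the paper omits.
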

\begin{proof}
This is essentially the same as in maximum likelihood estimate of normal distribution. Differentiating~\eqref{crossent} and solving for the critical point we get for the mean
\begin{align}
 0 = E_{p(X)} \frac{\partial}{\partial \mu}\log q(X)  = \E_{p(X)} \frac{(X-\mu)}{\sigma^2},
\end{align}
from which $\mu = \E_{p(X)}[X]$. And for the variance:
\begin{align}\notag
 0 = E_{p(X)} \frac{\partial}{\partial \sigma}\log q(X)  = \E_{p(X)} \Big(-\frac{1}{\sigma} + \frac{(X-\mu)^2}{\sigma^3}\Big),
\end{align}
from which $\sigma^2 = \E_{p(X)}[(X-\mu)^2]$.
\end{proof}
%
\section{Auxiliary Results}
\subsection{From Latent Variable Model to Belief Network}\label{sec:cond-densities}
\begin{lemma}[Stochastic binary neuron]\label{CStochasticBN}
Let $Y = \leftbb X - Z \geq 0 \rightbb$, where $Z$ is independent \rv with cdf $F_Z$. Then $\Pr(Y{=}1 \mid X) = F_Z(X)$.
\end{lemma}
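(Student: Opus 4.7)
The plan is to unpack the indicator and use independence of $Z$ from $X$ to reduce the conditional probability to the marginal cdf of $Z$ evaluated at $X$. This is essentially the same reasoning used in the proof of \cref{O:logistic-threshold}, just written without the outer expectation.

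First I would rewrite the event $\{Y=1\}$ in terms of $Z$: by definition of the Iverson bracket, $Y=1$ is equivalent to $X - Z \geq 0$, i.e.\ $Z \leq X$. So
\begin{align}
\Pr(Y = 1 \mid X) = \Pr(Z \leq X \mid X).
\end{align}

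Next I would use the assumption that $Z$ is independent of $X$. Conditioning on $X = x$ leaves the distribution of $Z$ unchanged, so
\begin{align}
\Pr(Z \leq X \mid X = x) = \Pr(Z \leq x) = F_Z(x),
\end{align}
and therefore $\Pr(Y=1 \mid X) = F_Z(X)$ almost surely, which is the claim.

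There is no real obstacle here; the only subtlety worth flagging is making explicit that the equality holds as a function of $X$ (not just pointwise in $x$), which follows immediately from the disintegration formula under independence. The result is essentially a restatement of the ``latent threshold'' construction used throughout \cref{sec:general} to rewrite Bernoulli-type conditional distributions as deterministic comparisons against independent noise.
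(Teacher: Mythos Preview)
Your argument is correct and matches the paper's proof essentially line for line: the paper also writes $\Pr(Y{=}1\mid X)=\E[\leftbb X-Z\geq 0\rightbb \mid X]=\E[\leftbb Z\leq X\rightbb \mid X]=F_Z(X)$, which is exactly your rewriting of the event followed by independence. There is no meaningful difference in approach.
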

%
\begin{proof}
 We have
 \begin{equation}
  \Pr(Y{=}1 \mid X) =  \E \big[ \leftbb X - Z \geq 0 \rightbb \bigm | X \big] = 
  \E \big[ \leftbb Z \leq X \rightbb \bigm | X \big] = F_Z(X) .
 \end{equation}
\end{proof}

Thus, a stochastic binary neuron with logistic noise is equivalent to a Bernoulli-logistic unit~\protect{\cite[p.4]{Williams1992}} (this is a well-known interpretation in logistic regression) and such networks are equivalent to logistic belief nets~\cite{Neal:1992}. 
\begin{lemma}[Stochastic Rectifier]\label{CStochasticReLU}
Let $Y = \max(0,X - Z)$, where $Z$ is an independent \rv with cdf $F_Z$ and pdf $p_Z$. Then $p(Y{=}y \mid X) = (1-F_Z(X-y)) \delta_0(y) + p_Z(X-y) \leftbb y{>}0 \rightbb$, where $\delta$ is the Dirac distribution.
\end{lemma}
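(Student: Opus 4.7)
The plan is to compute the conditional distribution of $Y$ given $X$ by splitting into two regimes: the atom at $y=0$ arising from the $\max$ truncation, and the continuous part on $y>0$ inherited from $X-Z$. The density $p(Y=y\mid X)$ is then a mixture of a Dirac mass at $0$ and an absolutely continuous part supported on $(0,\infty)$.

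First I would handle the atom at zero. By definition, $Y=0$ precisely when $X-Z\leq 0$, i.e.\ $Z\geq X$. Conditional on $X$, since $Z$ is independent with cdf $F_Z$, this gives
\begin{equation*}
\Pr(Y=0\mid X) = \Pr(Z\geq X\mid X) = 1-F_Z(X).
\end{equation*}
In terms of a density with respect to the reference measure on $\Real$ (Lebesgue plus a point mass at $0$), this contributes the term $(1-F_Z(X))\,\delta_0(y)$.

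Second I would treat the region $y>0$. On the event $\{X-Z>0\}$ we have $Y=X-Z$, so conditional on $X$ this is a deterministic affine change of variable applied to $Z$. Since the Jacobian has absolute value $1$, the induced density at a point $y>0$ equals the density of $Z$ evaluated at $z=X-y$, namely $p_Z(X-y)$. Combining the two pieces (and using $\llbracket y>0\rrbracket$ to restrict the continuous part to its support) yields
\begin{equation*}
p(Y=y\mid X) = (1-F_Z(X-y))\,\delta_0(y) + p_Z(X-y)\,\llbracket y>0\rrbracket,
\end{equation*}
where in the first summand I write $F_Z(X-y)$ rather than $F_Z(X)$, which is legitimate since the $\delta_0(y)$ factor pins $y=0$.

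The only subtlety, and what I would be careful about, is the proper bookkeeping of the mixed discrete-continuous density: verifying that the two contributions do not overlap and that integrating the expression against any test function recovers $\E[g(Y)\mid X]=g(0)(1-F_Z(X))+\int_0^\infty g(y)\,p_Z(X-y)\,dy$. A clean way to present this is to compute $\Pr(Y\leq t\mid X)$ directly, obtaining $1-F_Z(X)$ for $t=0$ and $1-F_Z(X-t)$ for $t>0$, and then differentiating in $t$ in the distributional sense to recover both the Dirac mass and the continuous density, which avoids any ambiguity about the atom.
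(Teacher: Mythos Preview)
Your proposal is correct and essentially matches the paper's approach. The paper computes the conditional cdf $\Pr\{Y\leq y\mid X=x\}$, obtaining $0$ for $y<0$ and $1-F_Z(x-y)$ for $y\geq 0$, and then differentiates distributionally to exhibit the Dirac mass at $0$ plus the continuous density on $y>0$---precisely the ``clean way'' you describe in your final paragraph; your primary presentation (atom plus change of variable) is just a direct reading of the same computation.
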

\begin{proof}
 Let us begin with deriving the conditional cdf.~of $Y$. We have
 \begin{align}
  \Pr\{ Y\,{\leq}\,y \mid X \} & = 
  \E \big[\leftbb \max(0, X - Z) \leq y \rightbb \bigm | X \big] \\ 
   & = \E \big[\leftbb \min(X,Z) \geq X - y \rightbb \bigm | X \big]
 \end{align}
 The indicator function is obviously zero if $y < 0$. If, on the other hand, $y \geq 0$ then the indicator function is nonzero only if $Z \leq X$ and we get
 \begin{equation}
  \Pr\{ Y\,{\leq}\,y \mid X \} = 
  \begin{cases}
   0 & \text{if $y < 0$,} \\
   \E \big[\leftbb Z \geq X - y \rightbb \bigm | X \big] & \text{otherwise.} 
  \end{cases}
 \end{equation}
 Hence
 \begin{equation}
  \Pr\{ Y\,{\leq}\,y \mid X = x \} = 
  \begin{cases}
   0 & \text{if $y < 0$,} \\
   1 - F_Z(x-y) & \text{otherwise.} 
  \end{cases}
 \end{equation}
This distribution has a discrete component at $0$.
Consequently, we can write the density using the $\delta$ distribution as:
$p(y \mid x) = (1-F_Z(x-y)) \delta_0(y) + p_Z(x-y) \leftbb y > 0 \rightbb$.
\end{proof}

%
%
\subsection{Gaussian Belief Network View}\label{sec:NLGBN}
This subsection establishes more connections to related work. The latent variable model~\eqref{LVM} can be equivalently represented as a belief network of noisy activations $\hat X^k = W^k X^{k-1} + Z^{k}$ as primary variables. They become connected by the conditional densities
\begin{align}\label{view-NLGBN}
p(\hat X^k \mid \hat X^{k-1}) = \phi^k(\hat X^k_j - W^k f^{k-1}(\hat X^{k-1}) ),
\end{align}
where $\phi^k$ is the pdf of the noise $Z^k$, $\hat X^0 = X^0$ and $f^0$ is identity. The original variables are recovered as deterministic mappings: $X^k = f^k(\hat X^k)$. Note that regardless whether the original variables $X^k$ were binary or real valued, the noisy activations are always real-valued and are connected by a conditional pdf of a simple form. This representation proposed in~\cite{Frey-99} with Gaussian noise is known as {\em Nonlinear Gaussian Belief Network} (NLGBN). Conversions between representations~\eqref{DCIM} and~\eqref{view-NLGBN} and the effect of this choice on different algorithms was studied in~\cite{Kingma:2014,papaspiliopoulos2007,Kingma13-fast}.

There are therefore at least 3 views on the model: the latent variable model~\eqref{LVM}, the belief network model~\eqref{DCIM} and the belief network of noisy activations~\eqref{view-NLGBN}. Not every belief network given by~\eqref{DCIM} can be represented using the other views, but there is a rich family that can be represented in the form~\eqref{LVM} and equivalently transformed to others.

Our approximate inference utilizes only two moments of latent variables noise. 
It can be fairly assumed that all latent variables are Gaussian, therefore the model we consider is not much different from NLGBN. Our inference method can be derived from this model by assuming Gaussian factorized posteriors of all activations $\hat X^k$ and propagating their moments. The inference in~\cite{Frey-99} is based on the variational lower bound formulation, where the approximate posterior $q$ should minimize the backward KL divergence $KL(q\|p)$.
%
\subsection{Parameters Setting in~\cref{example:and}}\label{AND-params}
Parameters $a$ and $b$ are chosen such that $\E_X \S (a(X_1+X_2) + b) \geq 1-\varepsilon$ holds for $p(X_1{=}1)=1$, $p(X_2{=}1)=1$ and $\E_X \S (a(X_1+X_2) + b) \leq \varepsilon$ holds for $p(X_1{=}1)=1$, $p(X_2{=}1)=0$. In these cases the expectation is trivial. We have
\begin{align}
&1/(1+e^{-(2a+b)}) \geq 1-\varepsilon,\\
&1/(1+e^{-(a+b)}) \leq \varepsilon,
\end{align}
	from which we get
\begin{align}
2a+b \geq \log(\frac{1-\varepsilon}{\varepsilon}),\\
 a+b \leq -\log(\frac{1-\varepsilon}{\varepsilon}),
\end{align}
and subsequently
\begin{align}
a  = 2 \log(\frac{1-\varepsilon}{\varepsilon}) = 5.89,\\
b  = -3 \log(\frac{1-\varepsilon}{\varepsilon}) = -\frac{3}{2}.
\end{align}
%
%
\section{Details of Approximations}\label{sec:approximations}
\subsection{Summary List of Approximations}\label{sec:approx-list}
Below we list approximations for propagating moments through common layers. Functions $\phi$ and $\Phi$ denote respectively the pdf and the cdf of a standard normal distribution.
%
%
\setlength{\tabcolsep}{4pt}
\tabulinesep=3pt

\skipmyskip
\noindent
\begin{tabu}{|p{0.17\linewidth}|p{0.83\linewidth-4\tabcolsep}|}
\hline
\multicolumn{2}{|l|}{{\bf Linear}: $Y = W X$}\\
\hline
\multicolumn{2}{|p{\linewidth-2\tabcolsep}|}{
mean $\mu' = W \mu$;\ \  variance $\sigma'^2_j = \sum_i w_{ji}^2 \sigma_i^2$
}\\
\hline
\end{tabu}\\[-1.4pt]
\begin{tabu}{|p{0.17\linewidth}|p{0.83\linewidth-4\tabcolsep}|}
\hline
\multicolumn{2}{|l|}{{\bf Heaviside} : $Y = \leftbb X \geq 0 \rightbb $; \cite{Frey-99}, \cref{detail:Heaviside}
}\\
\hline
mean $\mu'$ & Normal approx: $\Phi(\mu / \sigma )$,
\newline
Logistic approx: $\S(\mu / s)$, $s  = \sigma/\sigma_\S$
\\
\hline
variance & $\mu'(1 - \mu')$ \\
\hline
\end{tabu}\\[-1.4pt]
\begin{tabu}{|p{0.15\linewidth}|>{\raggedright}p{0.85\linewidth-4\tabcolsep}|}
\hline
\multicolumn{2}{|l|}{{\bf ReLU} : $Y = \max(0,X)$; \cite{Frey-99}, \cref{detail:relu} }\\
\hline
\multicolumn{2}{|c|}{Normal approx}\\
\hline
mean $\mu'$ &
$\mu \Phi(a)+\sigma \phi(a)$, where $a = \mu/\sigma$\\
\hline
variance &
$\sigma^2 \R(a) \approx \sigma^2 \S(a/t)$,
\newline
where $\R(a) = a\phi(a)\,{+}\,(a^2+1)\Phi(a)\,{-}\,(a\Phi(a){+}\phi(a))^2$ and $t$ is a fitted constant.\\
\hline
\multicolumn{2}{|c|}{Logistic approx}\\
\hline
mean $\mu'$ &
$s \log(1+\exp(\mu/s))$, where $s = \sigma/\sigma_\S$\\
\hline
variance & 
$-2 s^2 \Li_2(-e^{\mu/s}) - \mu'^2$, where $\Li_2$ is dilogarithm
\\
\hline
\end{tabu}\\[-1.4pt]
\begin{tabu}{|p{0.15\linewidth}|>{\raggedright}p{0.85\linewidth-4\tabcolsep}|}
\hline
\multicolumn{2}{|l|}{{\bf LReLU} : $Y = \max(X,\alpha X)$; \cref{sec:max} }\\
\hline
\multicolumn{2}{|c|}{Normal approx}\\
\hline
mean $\mu'$ &
$\mu (\alpha + (1-\alpha) \Phi(a)) + \sigma(1-\alpha) \phi(a)$\\
\hline
variance &
$\sigma^2(\alpha^2  + (1-\alpha^2) \R(a))\approx \sigma^2( \alpha^2 + (1-\alpha^2)\S(a / t) )$\\
\hline
\end{tabu}\\[-1.4pt]
\begin{tabu}{|p{0.17\linewidth}|p{0.83\linewidth-4\tabcolsep}|}
\hline
\multicolumn{2}{|p{\linewidth-2\tabcolsep}|}{{\bf Logistic Bernoulli} : $Y$ Bernoulli, $p(Y{=}1\mid X) = \S(X)$ 
}\\
\hline
mean $\mu'$ & \cref{AP2a}, \cref{AP2b}\\
\hline
variance & $\mu' (1-\mu')$ \\
\hline
\end{tabu}\\[-1.4pt]
%
%
\begin{tabu}{|p{0.17\linewidth}|p{0.83\linewidth-4\tabcolsep}|}
\hline
\multicolumn{2}{|l|}{{\bf Logistic transform} : $Y = \S(X)$; \cref{detail:logistict-var} }\\
\hline
mean $\mu'$ & \cref{PEA}, \cref{AP2a}\\
\hline
variance & PEA variance~\cite[eq.14]{AstudilloN11}. 
\newline
$4 (1+4 \sigma^{-2})^{-1} (\mu'(1-\mu'))^2$
\\ 
\hline
\end{tabu}\\[-1.4pt]
%
\begin{tabu}{|p{0.15\linewidth}|>{\raggedright}p{0.85\linewidth-4\tabcolsep}|}
\hline
\multicolumn{2}{|l|}{{\bf Max} : $Y = \max(X_1, X_2)$; \cref{sec:max}}\\
\hline
mean $\mu'$ & $\mu_2 + s ( x \Phi(a) + \phi(a) )$,\newline
where $s = (\sigma_1^2+\sigma_2^2)^\frac{1}{2}$ and $a = (\mu_1-\mu_2)/s$.
\\
\hline 
variance & 
$
\sigma_1^2 \Phi(a)+ \sigma_2^2 \Phi(-a) + s^2 (\R(a) - \Phi(a)) \approx
\sigma_1^2 \S(a/t) + \sigma_2^2 \S(-a/t)
$,
\newline where $\R$ and $t$ are as in ReLU.
\\
\hline
\end{tabu}\\[-1.4pt]
\begin{tabu}{|p{0.15\linewidth}|p{0.85\linewidth-4\tabcolsep}|}
\hline
\multicolumn{2}{|l|}{  {\bf Softmax}
: $p(Y{=}y | X) = e^{X_y}/\sum_{k} e^{X_k}$;~\cref{detail:softmax}  }\\
\hline
\multicolumn{2}{|p{\linewidth-2\tabcolsep}|}{
Normal: 
$
q(y) = \prod_{k \neq y} \Phi \Big(\frac{\mu_y-\mu_k}{\sqrt{\sigma^2_y + \sigma^2_k + \sigma_S^2}}\Big)
$,
 renormalized.\newline
Logistic: 
$
q(y) = \Big(\sum_{k}\exp\Big\{ \frac{\mu_k - \mu_y}{ \sqrt{ (\sigma_k^2+\sigma_y^2)/\sigma_\S^2+1} } \Big\} \Big)^{-1}
$, renormalized.
\newline
Simplified: $q(y) = {\rm softmax_k}(\mu_k/\sqrt{\sigma_k^2/\sigma_S^2+1})$.
}\\
\hline
\end{tabu}\\[-1.4pt]
%
%
\begin{tabu}{|p{0.17\linewidth}|p{0.83\linewidth-4\tabcolsep}|}
\hline
\multicolumn{2}{|p{\linewidth-2\tabcolsep}|}{
{\bf Product} : $Y  = X_1 X_2$
}\\
\hline
\multicolumn{2}{|l|}{
mean: $\mu_1 \mu_2$, variance: $\sigma_1^2 \sigma_2^2 + \sigma_1^2 \mu_2^2 + \mu_1^2 \sigma_2^2$
}\\
\hline
\end{tabu}\\[-1.4pt]
\begin{tabu}{|p{0.17\linewidth}|p{0.83\linewidth-4\tabcolsep}|}
\hline
\multicolumn{2}{|l|}{{\bf Abs} : $Y = |X|$}\\
\hline
mean & 
$
\begin{aligned}
\mu' = 2 \E [ \relu(X) ]-\mu = 2 \mu \Phi(\mu/\sigma)+2\sigma \phi(-\mu/\sigma) - \mu
\end{aligned}
$
\\
\hline
variance & $\mu^2 + \sigma^2 - \mu'^2$ \\
\hline
\end{tabu}\\[-1.4pt]
\begin{tabu}{|p{0.17\linewidth}|p{0.83\linewidth-4\tabcolsep}|}
\hline
\multicolumn{2}{|p{\linewidth-2\tabcolsep}|}{{\bf Probit} : $Y$ Bernoulli with $p(Y{=}1\mid X) = \Phi(X)$
}\\
\hline
\multicolumn{2}{|l|}{composition: $Y = \Heviside(X - Z)$, $Z\sim \Phi$}\\
\hline
mean $\mu'$ & $\Phi(\mu/\sqrt{\sigma^2+1})$, \cite{Frey-99}, \cref{sec:probit}.\\
\hline
variance & $\mu' (1-\mu')$ \\
\hline
\end{tabu}\\[-1.4pt]
\begin{tabu}{|p{0.17\linewidth}|p{0.83\linewidth-4\tabcolsep}|}
\hline
\multicolumn{2}{|l|}{{\bf Normal cdf transform} : $Y = \Phi(X)$, \cite{Frey-99}}\\
\hline
mean & $\Phi(\mu/\sqrt{\sigma^2+1})$
\\
\hline
variance & upper bound \cite[eq. 27]{Frey-99}\\
\hline
\end{tabu}\\[-1.4pt]

\subsection{Heaviside Step Function}\label{detail:Heaviside}
We need to approximate the mean $\mu'$ of the indicator $\leftbb X \geq 0 \rightbb$. Assuming that $X$ has normal distribution with mean $\mu$ and variance $\sigma^2$, we have:
\begin{align}
\mu' &= \int_{0}^{\infty} \phi((x-\mu)/\sigma) \d x = \int_{-\mu/\sigma}^{\infty} \phi(x) \d x \\
& = 1 - \Phi (-\mu /\sigma) = \Phi (\mu/\sigma).
\end{align}
Since the square of the indicator matches itself, the second moment is also $\mu'$. The variance therefore equals $\sigma'^2 = \mu' - \mu'^2 = \mu'(1-\mu')$.
\par
Assuming $X$ has logistic distribution with mean $\mu$ and variance $\sigma^2$, we have 
\begin{align}
\mu' =  \int_{0}^{\infty} \S'((x-\mu)/s) \d x = \S (\mu /s),
\end{align}
where $s = \sigma/\sigma_\S$.
%
\subsection{Mean of the Logistic Trasform / Bernoulli Unit}\label{sec:pae}
\paragraph{Piecewise Exponential Approximation}
An approximation for estimating $\E_X \S(w\T X)$, more accurate than substituting the mean, was proposed in~\cite{AstudilloN11}. The function $\S$ is approximated as the following piecewise exponential function:
\begin{align}\label{PEA-sigmoid}
\S(z) \approx \L(z) = \begin{cases}
	2^{z-1},& \IF z < 0\\
	1 - 2^{-z-1},& \IF z \geq 0,
	\end{cases}
\end{align}
shown in~\cref{fig:pea}. 
Assuming $w\T X$ is normally distributed with mean $\mu$ and variance $\sigma^2$, authors of~\cite{AstudilloN11} obtain the expression $\E_X \L(w\T X) = $
\begin{align}\label{PEA}\tag{PEA}
\Phi(\mu / \sigma) + 2^{\frac{\ln(2)}{2}\sigma^2-1}\Big(
2^{\mu} \Phi\big((-\mu{-}\ln(2)\sigma^2)/ \sigma\big)
- 2^{-\mu} \Phi\big((\mu{-}\ln(2)\sigma^2)/\sigma \big) \Big).
\end{align}
The error of the approximation comes from two sources: the approximation~\eqref{PEA-sigmoid} and the assumption of normality of $w\T X$.
It is easy to see that $\L$ is actually the Laplace distribution with scale $s = 1/\log(2)$. Thus,~\cite{AstudilloN11} propagate uncertainty through the Laplace cdf activation assuming the input is normally distributed.
\begin{figure}[h]
\includegraphics[width=0.5\linewidth]{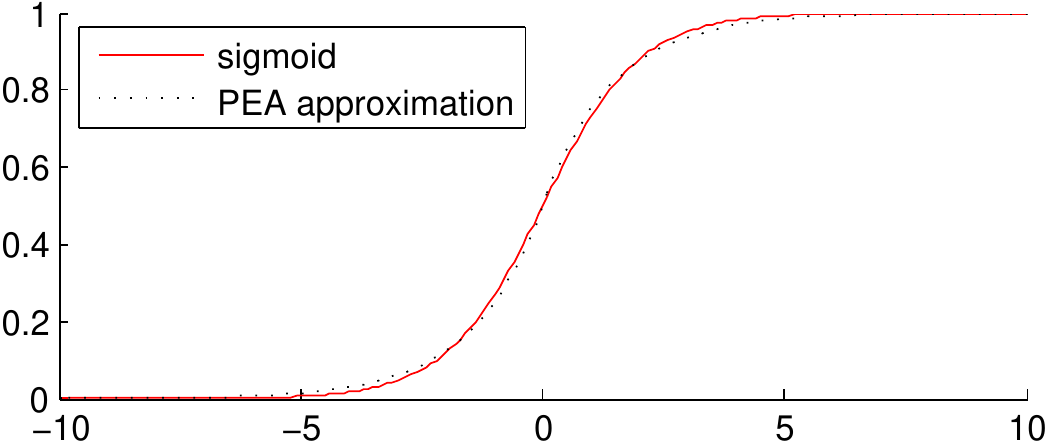}
\caption{Piecewise exponential approximation~\cite{AstudilloN11} of logistic cdf, equal to the Laplace cdf with scale $1/\log 2$.}\label{fig:pea}
\end{figure}

We evaluated different approximations AP1, \cref{AP2a},\cref{AP2b} and \cref{PEA} in \cref{fig:approx-eval} by measuring the forward KL divergence to the true posterior distribution (computed by convolving densities). We compare to the sampling-based approximation. 
The results indicate that the baseline AP1 can be significantly improved and that our approximations are on par with sampling.
\subsection{Variance of Logistic Transform}\label{detail:logistict-var}
For the approximation of the mean we can use the same expression as for Logistic Bernoulli, \eg, recall~\eqref{AP2b} is $\mu' = \S(\mu/s)$, where $s = (\sigma^2 / \sigma^2_\S + 1)^{\frac{1}{2}}$.

The approximation of the variance is more involved in this case. There is no tractable analytic expression for the second moment assuming either normal or logistic distribution of $X$. The approximation of variance based on PEA~\cite[eq.14]{AstudilloN11} was found inaccurate\footnote{
The mentioned numerical accuracy is of the variance expression computed with Mathematica based on  PEA approximation of the logistic function.
The equation 14 from \cite{AstudilloN11} was giving different results, inaccurate even for sigma around $1$, possibly due to a mistake in the equation.} for small $\sigma$ and because it makes a heavy use of error functions (that can only be approximated with series) is numerically unstable for a wider range of parameters.

\par
\paragraph{Practical Approximation}
We have constructed the following practical approximation:
\begin{align}\label{logistic-heuristic-approx}
\sigma'^2 \approx {\textstyle 4} (1+4 \sigma^{-2})^{-1} (\mu'(1-\mu'))^2.
\end{align}
It is set up to match the following asymptotes. Note that for $\mu=0$, there holds $(\mu'(1-\mu'))^2 = 1/16$ for all $\sigma$. Then for $\mu=0$ and $\sigma^2\rightarrow \infty$ it must be $\sigma'^2={\textstyle \frac{1}{4}}$: we can think of this limit as rescaling the sigmoid function, which approaches then the Heaviside step function, which has variance $\frac{1}{4}$ at $\mu=0$. Another asymptote is for $\sigma^2\rightarrow 0$, where the variance should approach that of the linearized model with the slope given by the derivative of the logistic function at $\mu=0$. This is satisfied since $\mu'$ approaches $\S(\mu)$ and the derivative of the logistic function can be written as $\S(\mu)(1-\S(\mu))$. The variance is then proportional to the square of the derivative. Thus, the model~\eqref{logistic-heuristic-approx} is designed to be accurate for small $\sigma$. 
%
For $\mu=0$ the maximum relative error in $\sigma'$ is $14\%$ for the whole range of $\sigma$ (numerical simulations). For $\sigma\leq 1$ the maximum relative error is $26\%$ for all $\mu$, more accurate for smaller $\sigma$. For $\sigma > 1$ the approximation degrades slowly. 

\paragraph{Approximation for Large Variance}
For $\sigma \geq 2$, the following approximation is more accurate. In order to compute $E_X \S^2(X)$ we apply the same trick as with expectation of $\S(X)$. Observe that $\S^2(X)$ can be considered as a cdf and let $Z$ have this distribution. Then $E_X \S^2(X) = E_{X,Z}\leftbb X-Z \geq 0 \rightbb$. We calculate that $\mu^Z = 1$ and $\sigma^Z = (\pi^2/3 - 1)^\frac{1}{2}$. Assuming that $X-Z$ is distributed normally with mean $\mu^X-1$ and variance $(\sigma^X)^2 + \pi^2/3 - 1$ we get the approximation for the variance
\begin{align}
\Phi\big((\mu - 1) / (\sigma^2 + \sigma_\S^2-1)^{\frac{1}{2}} \big) - \mu'^2.
\end{align}
This is not directly usable in practice because of the error functions and needs to be approximated further.
\begin{figure*}[!t]
\includegraphics[width=0.2\linewidth]{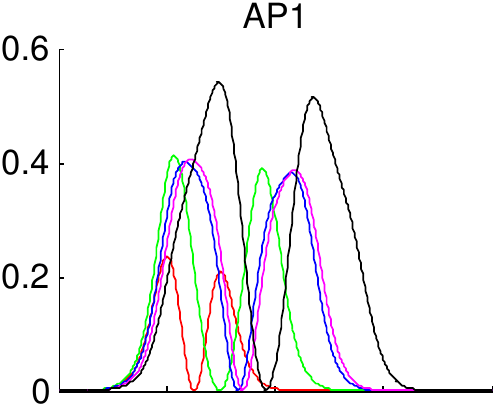}%
\includegraphics[width=0.2\linewidth]{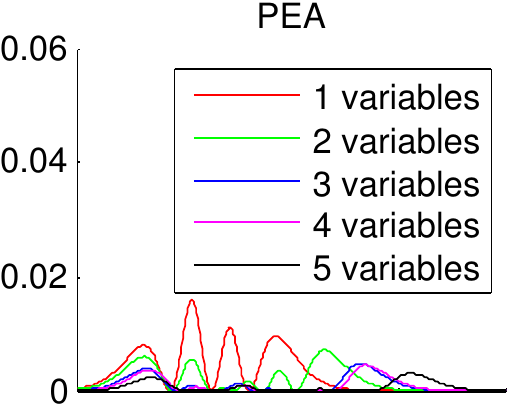}%
\includegraphics[width=0.2\linewidth]{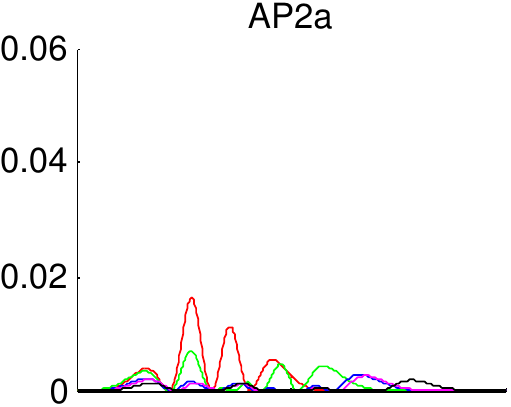}%
\includegraphics[width=0.2\linewidth]{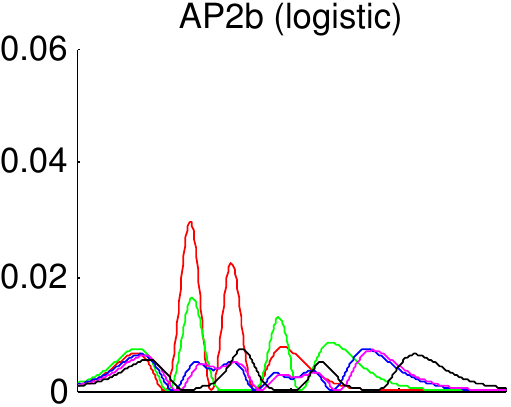}%
\includegraphics[width=0.2\linewidth]{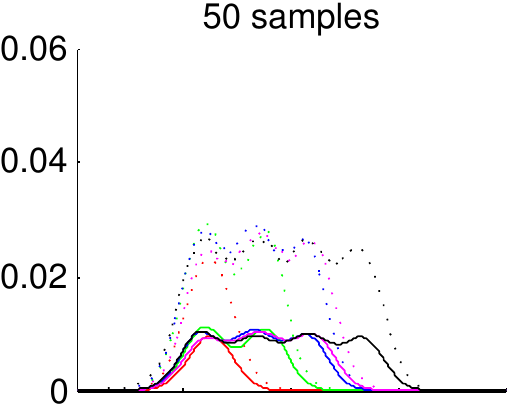}%
\caption{Comparison of approximations for logistic unit in~\cref{fig:density-example}. Each line shows KL divergence to the true posterior (in bits) as a function of the bias. Curves are plotted for 1-5 input variables $X_i$. Note the 10-fold scaling of y-axis of AP1 compared to others. Note that all approximations: PEA and AP2a, AP2b achieve about the same accuracy in this test, comparable to the estimate using 50 samples (solid lines show mean KL divergence over 1000 repeated trials, dashed lines show 90 percent confidence interval of the trials).
\label{fig:approx-eval}
}
\end{figure*}
\begin{figure}[t]
\centering
\includegraphics[width=0.49\linewidth]{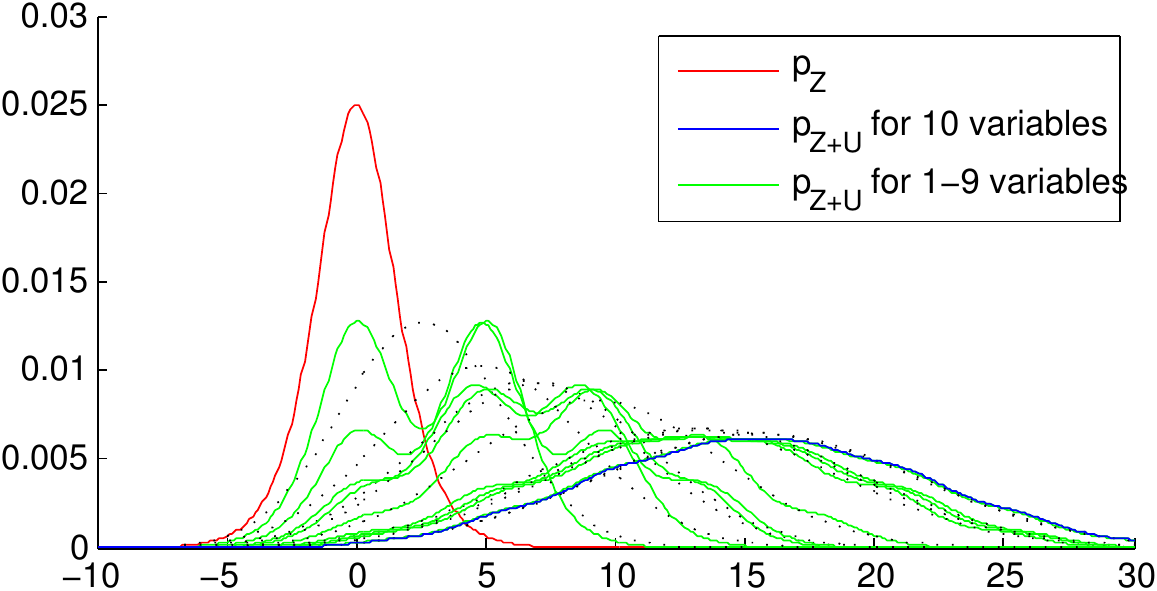}
\includegraphics[width=0.49\linewidth]{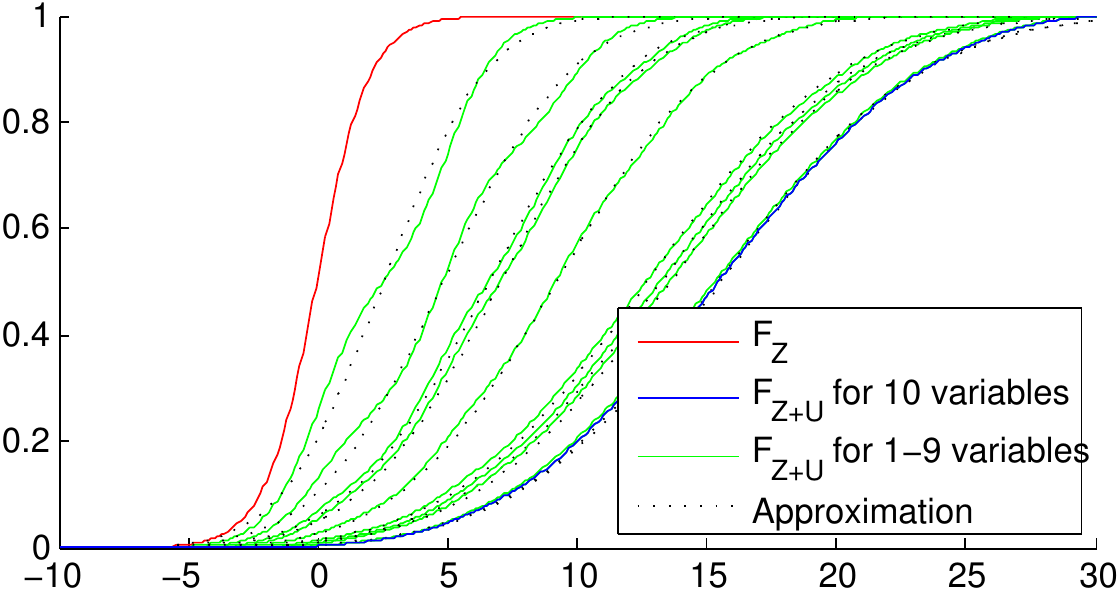}\\
$w = (5.1, 4.6, 3.5, 0.9, 4.3, 7.0, 1.1, 0.7, 3.6, 0.3)$
\caption{
Illustration to \cref{prop:gaussian-input}: an example of the distribution of the sum $U+Z$, where $U = \sum_{i=1}^{n}w_i X_i$ for 1 to $10$ independent Bernoulli variables $X_i$ with $p(X_i{=}1) = 0.5$ and $Z$ has a logistic distribution. Weights $w$ are displayed in the bottom (drawn once from $U[0,10]$).
{\em Left}: densities. With $2$ inputs the distribution is bimodal, but with more inputs it quickly tends to normality.
{\em Right}: cdfs of $Z$ and $Z+U$ and the approximating normal cdf~\cref{AP2a}. Note that due to the symmetry of the Logistic density, $U+Z$ and $U-Z$ have identical distributions.
\label{fig:density-example}}
\end{figure}
%
\subsection{ReLU}\label{detail:relu}
The mean of $\max(0,X)$ expresses as
\begin{align}
\mu' = \int_{-\infty}^{\infty} \max(0,x) p(x) \d x = \int_{0}^{\infty} x p(x) \d x.
\end{align}
Consequently, assuming $X$ to be normally distributed, we get
\begin{align}
&\int_{0}^{\infty} x \phi((x-\mu)/\sigma ) \d x = \int_{-\mu /\sigma }^{\infty} (\mu + \sigma x) \phi(x) \d x\\
&=\mu \int_{-\mu/\sigma }^{\infty}  \phi(x) \d x - \sigma  \int_{-\mu/\sigma }^{\infty} \phi'(x) \d x \\
& =\mu \Phi(\mu/\sigma ) + \sigma  \phi(\mu/\sigma ).
\end{align}
where we used that $x \phi(x) = \phi'(x)$ holds for the pdf of the standard normal distribution,

Second moment:
\begin{align}
&\int_{0}^{\infty} x^2 \phi((x-\mu)/\sigma ) \d x  = \int_{-\mu/\sigma}^{\infty} (\sigma x+\mu)^2 \phi(x) \d x\\
&= \int_{-\mu/\sigma}^{\infty} (\sigma^2 x^2 + 2 \sigma \mu x +\mu^2) \phi(x) \d x\\
\notag
&= \sigma^2 \int_{-\mu/\sigma}^{\infty} -x \phi'(x) \d x + 2\sigma \mu \phi(\mu/\sigma) + \mu^2 \Phi(\mu/\sigma) \\
\notag
&= \sigma^2 \Big( \frac{\mu}{\sigma} \phi\left(\frac{-\mu}{\sigma}\right) + \Phi\left(\frac{\mu}{\sigma}\right) \Big)  + 2\sigma \mu \phi\left(\frac{\mu}{\sigma}\right) + \mu^2 \Phi\left(\frac{\mu}{\sigma}\right)\\
&= \sigma \mu \phi(\mu/\sigma) + (\mu^2 + \sigma^2) \Phi(\mu/\sigma).
\end{align}
Let us denote $a = \mu/\sigma$. Then $\mu' = \sigma (a\Phi(a)+\phi(a))$. The variance in turn expresses as
\begin{align}
\sigma'^2 = \sigma^2 \R(a),
\end{align}
where 
\begin{align}\label{R-function}
\R(a) = a \phi(a) + (a^2+1)\Phi(a) - (a\Phi(a)+\phi(a))^2.
\end{align}
This function is non-negative by the definition of variance. However it contains positive and negative terms and in a numerical approximation the errors may not cancel and result in a negative value. In practice this function of one variable can be well-approximated by a single logistic function, and guaranteed to be non-negative in this way.

Logistic approximations are computed with Mathematica. The two approximations for the mean are illustrated in~\cref{fig:relu}. The logistic approximation of the mean was also given in~\cite[Prop.1]{Bengio2013EstimatingOP}. The two approximations appear very similar. ELU activation~\cite{ClevertUH15} can be also seen as an approximation of the expectation, if we disregard the horizontal and vertical offsets, which can be influenced by the biases before and after the non-linearity.
\begin{figure}[t]
{\centering
\includegraphics[width=0.5\linewidth]{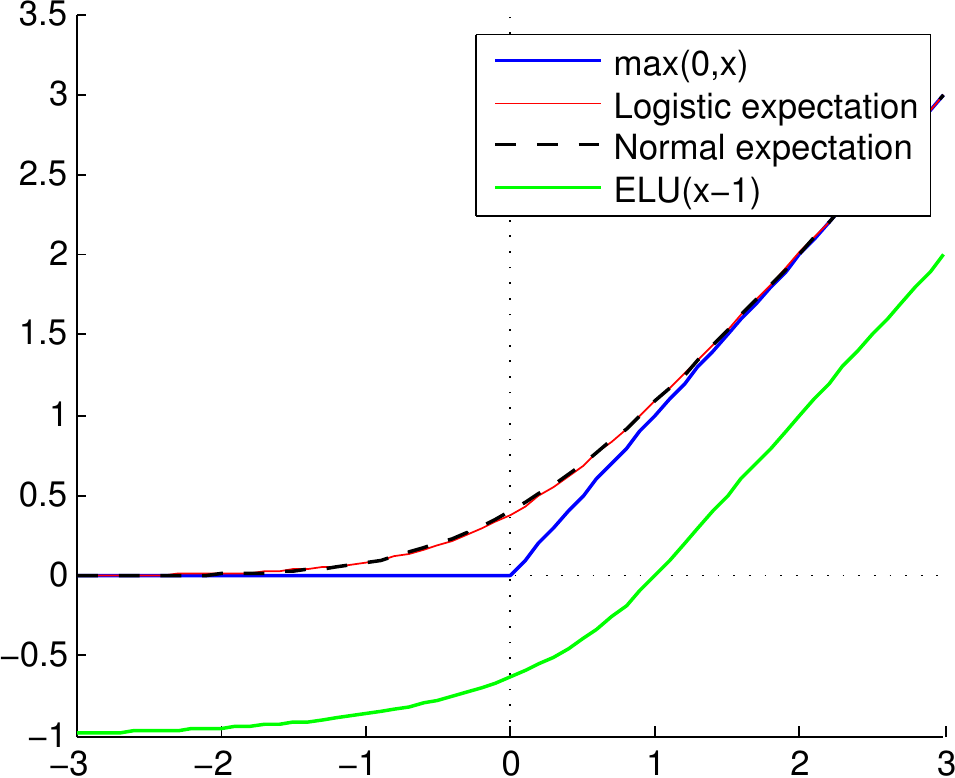}
}
\caption{The mean of $\max(0,X)$, under the approximation that $X$ is distributed Normally or logistically, in both cases with variance 1. The two approximations are very similar. For comparison also the ELU activation~\cite{ClevertUH15} is shown, with an input offset by 1.
\label{fig:relu}}
\end{figure}
\subsection{Max and Leaky ReLU}\label{sec:max}
We will consider the function $\max(X_1, X_2)$ in the two special cases: when $X_2 = \alpha X_1$, \ie, they are fully correlated, and when $X_1$ and $X_2$ are assumed independent. The first case is useful for representing leaky ReLU, given by $\max(X, \alpha X)$ and the second case may be used to handle cases where we don't know the correlation, \eg max pooling and {\tt maxOut}.
We use moments for the maximum of two correlated Gaussian random variables given in~\cite{NadarajahK08}. Denoting $s = (\sigma_1^2+\sigma_2^2 - 2 \Cov[X_1,X_2])^\frac{1}{2}$ and $a = (\mu_1{-}\mu_2)/s$, the mean and variance of $\max(X_1,X_2)$ can be expressed as:
\begin{subequations}
\begin{align}
\mu' & = \mu_1 \Phi(a) + \mu_2 \Phi(-a)  + s \phi(a),\\
\label{max-var-e1}
\sigma'^2 & 
\begin{aligned}[t]
=\:& (\sigma_1^2+\mu_1^2)\Phi(x) + (\sigma_2^2+\mu_2^2)\Phi(-a)\\
+\:&(\mu_1 + \mu_2)s\phi(a) - \mu'^2.
\end{aligned}
\end{align}
\end{subequations}
The mean can be expressed as:
\begin{align}\label{max-mean-e1}
\mu' = \mu_2 + s \big( a \Phi(a) + \phi(a) \big).
\end{align}
Substituting this expression into~\eqref{max-var-e1} we obtain
\begin{align}\label{max-var-gen}
\sigma'^2 = \sigma_1^2 \Phi(a) + \sigma_2^2\Phi(-a) + s^2 (a^2 \Phi(a) + a \phi(a) - (a\Phi(a) +\phi(a))^2 )
\end{align}
Reusing the function $\R$~\eqref{R-function}, the variance expresses as:
\begin{align}\label{max-var-gen2}
\sigma'^2 = \sigma_1^2 \Phi(a)+ \sigma_2^2 \Phi(-a) + s^2 (\R(a) - \Phi(a)).
\end{align}
\paragraph{Leaky ReLU}
We now can derive a simplified expression for {\tt LReLU}. Assume that $\alpha <1$, let $X_2 = \alpha X_1$ and denote $\mu = \mu_1$ and $\sigma^2 = \sigma_1^2$. Then $\mu_2 = \alpha \mu$, $\sigma_2^2 = \alpha^2 \sigma^2$ and $\Cov[X_1,X_2] = \Cov[X_1,\alpha X_1] = \alpha \sigma^2$. 
We then have $s = \sigma (1-\alpha)$ and $a = (\mu_1-\mu_2)/s = \mu (1-\alpha)/s = \mu/\sigma$.
The mean $\mu'$ expresses as
\begin{align}
\mu' = \mu (\alpha + (1-\alpha) \Phi(a)) + \sigma(1-\alpha) \phi(a).
\end{align}
The variance $\sigma'^2$ expresses as
\begin{align}
& \sigma^2\Big( \Phi(a) + \alpha^2 (1-\Phi(a)) + (1-\alpha)^2 \big(a^2\Phi(a)  
+ a\phi(a) - (a \Phi(a) +\phi(a))^2\big) \Big)\\
& =\sigma^2(\alpha^2  + 2\alpha(1-\alpha)\Phi(a)+ (1-\alpha)^2 \R(a)).
\end{align}
In practice we approximate it with the function 
\begin{align}\label{LReLU-approx}
\sigma'^2 \approx \sigma^2( \alpha^2 + (1-\alpha^2)\S(a / t) ),
\end{align}
where $t$ is set by fitting the approximation (see~\cref{sec:implementation}). The approximation is shown in~\cref{fig:LReLU}.
\paragraph{Uncorrelated Case} In this case we have $s^2 = \sigma_1^2+ \sigma_2^2$. The expression for the variance~\eqref{max-var-gen} can be written as 
\begin{align}
\sigma'^2 = \sigma_1^2 \R(a)+ \sigma_2^2 (\Phi(-a) + \R(a) - \Phi(a)).
\end{align}
This expression can be well approximated with
\begin{align}
\sigma_1^2 \S(a/t) + \sigma_2^2 \S(-a/t)
\end{align}
with a suitable parameter $t$. For the purpose of visualization, consider applying this approximation for computing the moments of $\max(X, \alpha X)$, ignoring the correlation. It will result in a plot similar to~\cref{fig:LReLU} but with a slightly increased variance in the transition part and with a slightly more smoothed mean.


%
%
%
\subsection{Softmax}\label{detail:softmax}
For the posterior of softmax\footnote{The established term {\em softmax} is somewhat misleading, since the hard version of the function computes not $\max$ of its arguments but $\argmax$ in a form of indicator.} $p(Y{=}y\mid X) = \exp(X_y)/ \sum_{i} \exp(X_i)$ we need to estimate 
\begin{align}\label{softmax-prob}
\E_{X} \Big[ \frac{ e^{X_y} }{\sum_k e^{X_k} } \Big] = \E_{X} \Big[ \frac{ 1 }{1 + \sum_{k\neq y} e^{X_k - X_y} } \Big].
\end{align}
Let $U_k = X_y - X_k$ for $k \neq y$, so that $U$ is a \rv in $\Real^{n-1}$. Let us assume for simplicity that $y = n$.
Expression~\eqref{softmax-prob} can be written as 
\begin{align}\label{softmax-mnv-l}
\E_{U} \Big[ \big(\textstyle 1 + \sum_{k < n} e^{-U_k} \big)^{-1} \Big] = \E_{U} [\S_{n-1}(U)],
\end{align}
where $\S_{n-1}$ is the cdf of the $(n{-}1)$-variate logistic distribution~\cite[eq. 2.5]{Malik-73}:
\begin{align}\label{mnv-l}
\S_{n-1}(u) = \frac{1}{1+ \sum_k e^{-u_k}}.
\end{align}
We can apply the same trick as in~\cref{O:logistic-threshold}. 
Let $Z \sim \S_{n-1}$. Then 
\begin{align}\label{softmax-mnv-l-q}
\E_{U} \S_{n-1}(U) = \E_{U,Z} \leftbb U-Z \geq 0 \rightbb.
\end{align}


%
For multi-variate logistic distribution of $Z$, the marginal distribution of $Z_k$ is logistic~\cite{Malik-73}, hence we know $Z_k$ has mean $0$ and variance $\sigma_S^2$. We can thus express the mean of $(U-Z)_k$ as $\tilde \mu_k = \mu_y-\mu_k$ and its variance as $\tilde \sigma^2_k = \sigma^2_y + \sigma^2_k + \sigma_S^2$ (this relies only on that $X_y$, $X_k$ and $Z_k$ are independent).
Note in, general, the components of $U -Z$ are not independent.

The normal approximation, similar to~\cref{AP2a} is as follows. Assuming that $U-Z$ has multivariate normal distribution with diagonal covariance (which implies that the components of $U -Z$ are independent), gives the approximation:
\begin{align}
q(y) = \Pr\{U{-}Z \geq 0\} \approx 
\prod_{k \neq y} \Phi \Big(\frac{\tilde \mu_k}{\tilde \sigma_k}\Big).
\end{align}
Expanding, we obtain
\begin{align}
q(y) \approx \prod_{k \neq y} \Phi \Big(\frac{\mu_y-\mu_k}{\sqrt{\sigma^2_y + \sigma^2_k + \sigma_S^2}}\Big).
\end{align}

The logistic approximation, similar to~\cref{AP2b} is as follows. 
Assuming that $U-Z$ has multivariate logistic distribution with the matching mean and the diagonal elements of the covariance matrix, we can approximate
\begin{align}\label{softmax-mnv-l-q1}
q(y) \approx \S_{n-1}\big( \{ \tilde\mu_k / (\tilde\sigma_k/\sigma_\S) \}_k \big).
\end{align}
Expanding, we obtain the approximation
\begin{align}\label{softmax-ap2}
q(y) \approx \Big(  1+ \sum_{k\neq y}\exp\Big\{ \frac{\mu_k - \mu_y}{ \textstyle\sqrt{ (\sigma^2_k + \sigma^2_y)/\sigma^2_\S+1}  } \Big\}  \Big)^{-1}.
\end{align}
In both cases, a renormalization of $q$ is needed in order to ensure a proper distribution. This is not guaranteed by the approximation as was the case with~\eqref{AP2a}, \eqref{AP2b}.
%
%
Approximation~\eqref{softmax-ap2} can be implemented in the logarithmic domain as follows:
\begin{align}
\log q(y) := - \logsumexp_k \frac{\mu_k-\mu_y}{\sqrt{(\sigma_k^2+\sigma_y^2)/\sigma_S^2+1}},
\end{align}
where $\logsumexp_k$ is $\log \sum_k \exp$ operation. This can be done in quadratic time and with linear memory complexity. The renormalization of $q$ in the logarithmic domain takes the form
\begin{align}
\log q(y) := \log q(y) - \logsumexp_k \log q(k).
\end{align}
The log likelihood~\eqref{log-likelihood} can take $\log q(y)$ directly, avoiding the exponentiation, as with regular softmax.
It turned out that back propagation through this softmax was rather slow and we replaced it with a simplified approximation
\begin{align}
\log q(y) := -\logsumexp_k \Big(\frac{\mu_k}{\sqrt{\sigma_k^2/\sigma_S^2+1}} - \frac{\mu_y}{\sqrt{\smash[b]{\sigma_y^2}/\sigma_S^2+1}} \Big),
\end{align}	
which reduces to standard softmax of $\mu_k/\sqrt{\sigma_k^2/\sigma_S^2+1}$. There is a noticeable loss of accuracy in the KL divergence of the posterior distribution seen in the experiments (\cref{tab:accuraacy-LeNet,tab:accuracy-CIFAR}), which is however not critical. We therefore used this expression in the end-to-end training experiments.

\paragraph{Connection to argmax}
We now explain a refinement of the latent variable model for softmax, which allows to see its interpretation as the expected value of $\argmax$.
In fact, this interpretation is well known in particular in the multinational logistic regression, we just expose the relation between these models.		

As mentioned above, the components of $Z$ following the $(n-1)$-variate logistic distribution are not independent, but they have the following latent variable representation~\cite[eq. 2.1-2.4]{Malik-73}.
Let $\alpha$ has the exponential density function $e^{-\alpha}$, $\alpha >0$. Let variables $Z_i$ given $\alpha$ be independent with the conditional distribution
\begin{align}\label{Malik-conditional-cdf}
F_{Z_k \mid \alpha}(z) = e^{-\alpha e^{-z}}.
\end{align}
Then the joint marginal distribution of $Z$ is the multivariate logistic distribution $\S_{n-1}$. This latent variable model can in turn be rewritten in the following form, very suitable for our purpose.
\begin{lemma}\label{lemma:mnv-logistic}
Let $V_i$ be independent \rv's with the exponential density $p_{V_i}(u) = e^{-u}$, $u > 0$.
Let $Z_{i} = \log(V_n) - \log(V_i)$ for i = $1,\dots, n-1$. Then $Z$ has $(n{-}1)$-variate logistic distribution.
\end{lemma}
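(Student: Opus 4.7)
The plan is to compute the joint cdf of $Z = (Z_1, \ldots, Z_{n-1})$ directly and verify that it matches the formula~\eqref{mnv-l} for $\S_{n-1}$. The key observation is that each event $\{Z_i \leq z_i\}$ rewrites as $\{V_n \leq V_i e^{z_i}\}$, so conditioning on $V_n$ decouples all constraints into independent tail events on the $V_i$, which are trivial to evaluate for exponential random variables.

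\textbf{Step 1 (rewriting the event).} First I would note that since $V_n, V_i > 0$ almost surely,
\begin{align*}
Z_i \leq z_i \iff \log V_n - \log V_i \leq z_i \iff V_i \geq V_n e^{-z_i}.
\end{align*}
Hence
\begin{align*}
F_Z(z) = \Pr\bigl\{ V_i \geq V_n e^{-z_i}\ \text{for all } i = 1, \ldots, n-1 \bigr\}.
\end{align*}

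\textbf{Step 2 (conditioning on $V_n$).} Next I would condition on $V_n = t$ (with marginal density $e^{-t}$ on $t>0$) and use independence of the $V_i$'s to factor:
\begin{align*}
\Pr\bigl\{ V_i \geq t e^{-z_i}\ \forall i \mid V_n = t\bigr\}
= \prod_{i=1}^{n-1} \Pr\{V_i \geq t e^{-z_i}\}
= \prod_{i=1}^{n-1} e^{-t e^{-z_i}}
= \exp\Bigl(-t \sum_{i=1}^{n-1} e^{-z_i}\Bigr),
\end{align*}
where I used $\Pr\{V_i \geq s\} = e^{-s}$ for $s>0$ (which holds here since $te^{-z_i} > 0$).

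\textbf{Step 3 (integrating out $V_n$).} Then
\begin{align*}
F_Z(z) = \int_0^\infty e^{-t} \exp\Bigl(-t \sum_i e^{-z_i}\Bigr) \, dt
= \int_0^\infty \exp\Bigl(-t \bigl(1 + \sum_i e^{-z_i}\bigr)\Bigr) dt
= \frac{1}{1 + \sum_i e^{-z_i}},
\end{align*}
which is precisely $\S_{n-1}(z)$ as defined in~\eqref{mnv-l}. This completes the identification of the joint distribution.

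There is no real obstacle here: the argument is two lines once we exploit the lack of memory and scaling of the exponential. The only thing worth double-checking is that $F_Z$ characterizes the distribution (it does, being a joint cdf) and that the representation $Z_i = \log V_n - \log V_i$ is consistent with the conditional latent-variable representation~\eqref{Malik-conditional-cdf}: substituting $\alpha = V_n$ gives $F_{Z_i \mid V_n}(z) = \Pr\{V_i \geq V_n e^{-z} \mid V_n\} = e^{-V_n e^{-z}}$, matching~\eqref{Malik-conditional-cdf} with $\alpha = V_n$ exponential, which is a useful sanity check rather than a step of the proof itself.
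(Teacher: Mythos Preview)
Your proof is correct and actually more self-contained than the paper's. You compute the joint cdf of $Z$ directly by conditioning on $V_n$ and integrating, landing on $\S_{n-1}(z)$ without any external input. The paper instead verifies that the construction matches Malik's latent-variable characterization: it shows that $-\log V_i$ has cdf $e^{-e^{-u}}$, hence $F_{Z_i\mid V_n=\alpha}(z)=e^{-\alpha e^{-z}}$, which is exactly~\eqref{Malik-conditional-cdf}; it then invokes the cited fact from~\cite{Malik-73} that conditional independence with this kernel and exponential $\alpha$ yields the $(n{-}1)$-variate logistic. Amusingly, what you relegate to a ``sanity check'' in your last paragraph is essentially the paper's entire argument. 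Your route avoids the dependence on the cited characterization; the paper's route makes the connection to the Gumbel/extreme-value structure (and hence to the $\arg\max$ interpretation that follows) more transparent.
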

\begin{proof}
It can be verified that the cdf of $-\log(V_i)$ expresses as $F_{-\log(V_i)}(u) = $
\begin{align}\label{ev1-cdf}
 Pr\{ -\log(V_i) \leq u\} = Pr\{ V_i \geq e^{-u}\} = \int_{e^{-u}}^{\infty} e^{-x} dx = e^{-e^{-u}}.
\end{align}
The conditional cdf of $Z_i$ given $V_n = \alpha$, in turn expresses as 
\begin{align}
 F_{Z_i}(z) = Pr\{\log(\alpha) - \log(V_i) \leq z\} =  F_{-\log(V_i)}(z - \log(\alpha))\\
= e^{-e^{-z + \log(\alpha)}} = e^{-\alpha e^{-z}},
\end{align}
which matches~\eqref{Malik-conditional-cdf}.
\end{proof}
In combination with~\eqref{softmax-mnv-l}, the softmax has a latent variable formulation $p(Y{=}y \mid X = x) = $
\begin{align}\label{softmax-lvm-argmax}
 & \E \big[ x_y - x_k - Z_k \geq 0 \ \forall k \big] = \E \big[ x_y - \log V_y - (x_k - \log V_k)  \geq 0  \ \forall k \big]\\
\notag
= & \E \big[ x_y + \Gamma_y \geq x_k + \Gamma_k  \ \forall k \big],
\end{align}
where $\Gamma_k = -\log V_k$ 	for $k=1,\dots n$ are independent noises with cdf~\eqref{ev1-cdf} known as Gumbel or type I extreme value distribution. It follows that the softmax model $p(Y \mid X)$ can be equivalently defined as
\begin{align}\label{softmax-lvm-argmax}
Y = \argmax_k (X_k + \Gamma_k).
\end{align}
Denoting $\tilde X_k = X_k + \Gamma_k$ the (additionally) noised input variables, we can express
\begin{align}
p(Y {=} y) = \E \big[p(Y {=} y \mid X) \big] = \E \Big[ \argmax_k \tilde X_k = y \Big].
\end{align}
Therefore the problem of computing the expectation of softmax has been reduced to computing the expectation of $\arg \max$ for additionally noised inputs. This connection is very similar to how expectation of sigmoid function was reduced to the expectation of the Heaviside step function with additional injected noise input.
%


%
%
\subsection{Probit}\label{sec:probit}
Let us consider {\em probit} model: $Y$ is a Bernoulli \rv with $p(Y{=}1\mid X) = \Phi(X)$.
It has the latent variable interpretation $Y = X - Z$, $Z \sim \Phi$. In our approximation~\cref{AP2a} it changes only the value of noise variance, which gives a simplified formula 
\begin{align}
\E [\Phi(X)]		 \approx \Phi\Big( \frac{\mu}{\sqrt{\sigma^2 + 1}} \Big).
\end{align}
Note that the approximation that $X-Z$ is normally distributed becomes more plausible.
%
\section{Experiment Details}\label{sec:extraexp}
In this section we give all details necessary to ensure reproducibility of results and auxiliary plots giving more details on the experiments in the main part.
\subsection{Implementation Details}\label{sec:implementation}
We implemented our inference and learning in the pytorch\footnote{\url{http://pytorch.org}} framework. The source code will be published together with the paper.
The implementation contains a number of layers {\tt Linear, Conv2D, Sigmoid, ReLU, SoftMax, Normalize}, \etc, that input and output a pair of mean and variance and can be easily used to upgrade a standard model made of such layers. At present we use only higher-level pytorch functions to implement these layers. For example, convolutional layer is implemented simply as
\lstset{basicstyle=\footnotesize}
\begin{lstlisting}[language=Python]
	y.mean = F.conv2d(x.mean, w) + b
	y.var = F.conv2d(x.var, w*w)
\end{lstlisting}

The ReLU variance function $\R(x)$~\eqref{R-function}, which is also used in leaky ReLU, was approximated by a single logistic function
\begin{lstlisting}[language=Python]
	F.sigmoid(x /0.3729) 
\end{lstlisting}
fitted to minimize the maximum KL divergence for LReLU(0.01).
The cdf of the normal distribution was approximated by the cdf of the logistic distribution as $\Phi(x) \approx \S(x \pi/\sqrt{3})$, \ie, by matching the variance. Under this approximation the means of logistic-Bernoulli and logistic transform are the same for bith~\eqref{AP2a} and~\eqref{AP2b}. For the variance of logistic transform we used the expression~\eqref{logistic-heuristic-approx}.
\par
For numerical stability, it was essential that $\logsumexp$ is implemented by subtracting the maximum value before exponentiation
\begin{lstlisting}[language=Python]
 m, _ = x.max()
 m = m.detach() # does not influence gradient
 y = m + torch.log(torch.sum(torch.exp(x - m)))
\end{lstlisting}
\par
The relative times of a forward-backward computation in our higher-level implementation are as follows:
\begin{lstlisting}
	standard training      1
	BN                     1.5
	inference=AP2          3
	inference=AP2-norm=AP2 6
\end{lstlisting}
Please note that these times hold for unoptimized implementations. In particular, the computational cost of the AP2 normalization, which propagates the statistics of a single pixel statistics, should be negligible in comparison to propagating a batch of input images.
\subsection{Parameters}
We used batch size 128, Adam optimizer with learning rate $10^{-3}0.06^{k}$, where $k$ is the epoch number. This schedule smoothly decreases the learning rate by about order of 10 every 50 epochs.
Parameters of linear and convolutional layers were initialized using pytorch defaults, \ie, uniformly distributed in $[-1/\sqrt{c},\, 1/\sqrt{c}]$, where c is the number of inputs per one output.
All experiments use a fixed random seed so that the initial point is the same. The results were fairly repeatable with arbitrary seeds.
\subsection{Datasets}
We used MNIST\footnote{\url{http://yann.lecun.com/exdb/mnist/}} and CIFAR10\footnote{\url{https://www.cs.toronto.edu/~kriz/cifar.html}} datasets. Both datasets provide a split into training and test sets.
From the training set we split 10 percent (at random) to create a validation set. The validation set is meant for model selection and monitoring the validation loss and accuracy during learning. The test sets were currently used only in the stability tests.

\subsection{Network specifications}
The MNIST single hidden layer network in~\cref{sec:experiments} MLP/MNIST has the architecture: input - FC 784x100 - Norm - Logistic Bernoulli - FC 100 x10 - Norm - Softmax, where {\tt Norm} may be {\tt none}, {\tt BN}, {\tt AP2}. With the normalization switched on, the biases of linear layers preceding normalizations are turned off.
\par
The LeNet in~\cref{sec:experiments} has the structure:
\begin{lstlisting}
Conv2d(1, 32, ks=5, st=2), Norm, Activation
Conv2d(32, 64, ks=5, st=2), Norm, Activation
Conv2d(64, 50, ks=4), Norm, Activation
Conv2d(50, 10, ks=1), Norm, LogSoftmax
\end{lstlisting}
Convolutional layer parameters list input channels, output channels, kernel size and stride. Dropout layers are inserted after activations.
\par
The CIFAR network in~\cref{sec:experiments} has a structure similar to LeNet with the following conv layers:
\begin{lstlisting}
ksize = [3,  3,  3,  3,   3,   3,   3,   1,   1 ]
stride= [1,  1,  2,  1,   1,   2,   1,   1,   1 ]
depth = [96, 96, 96, 192, 192, 192, 192, 192, 10]
\end{lstlisting}
each but the last one ending with Norm and activation. The final layers of the network are
\begin{lstlisting}
Norm, AdaptiveAvgPool2d, LogSoftmax
\end{lstlisting}
\subsection{Additional Experimental Results}
\paragraph{Stability to Adversarial Attacks}
\begin{figure}
\centering
\includegraphics[width=0.5\linewidth]{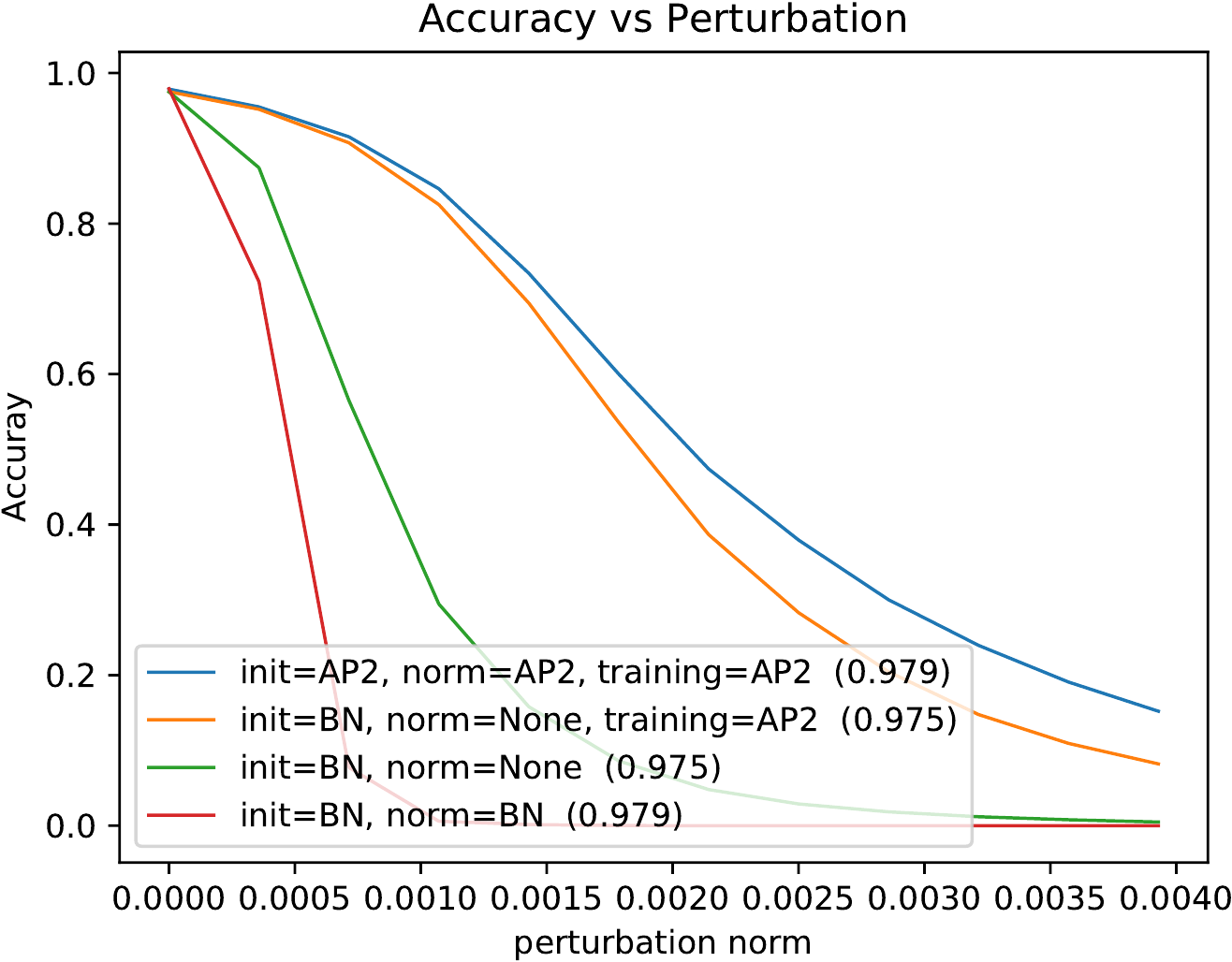}%
\includegraphics[width=0.5\linewidth]{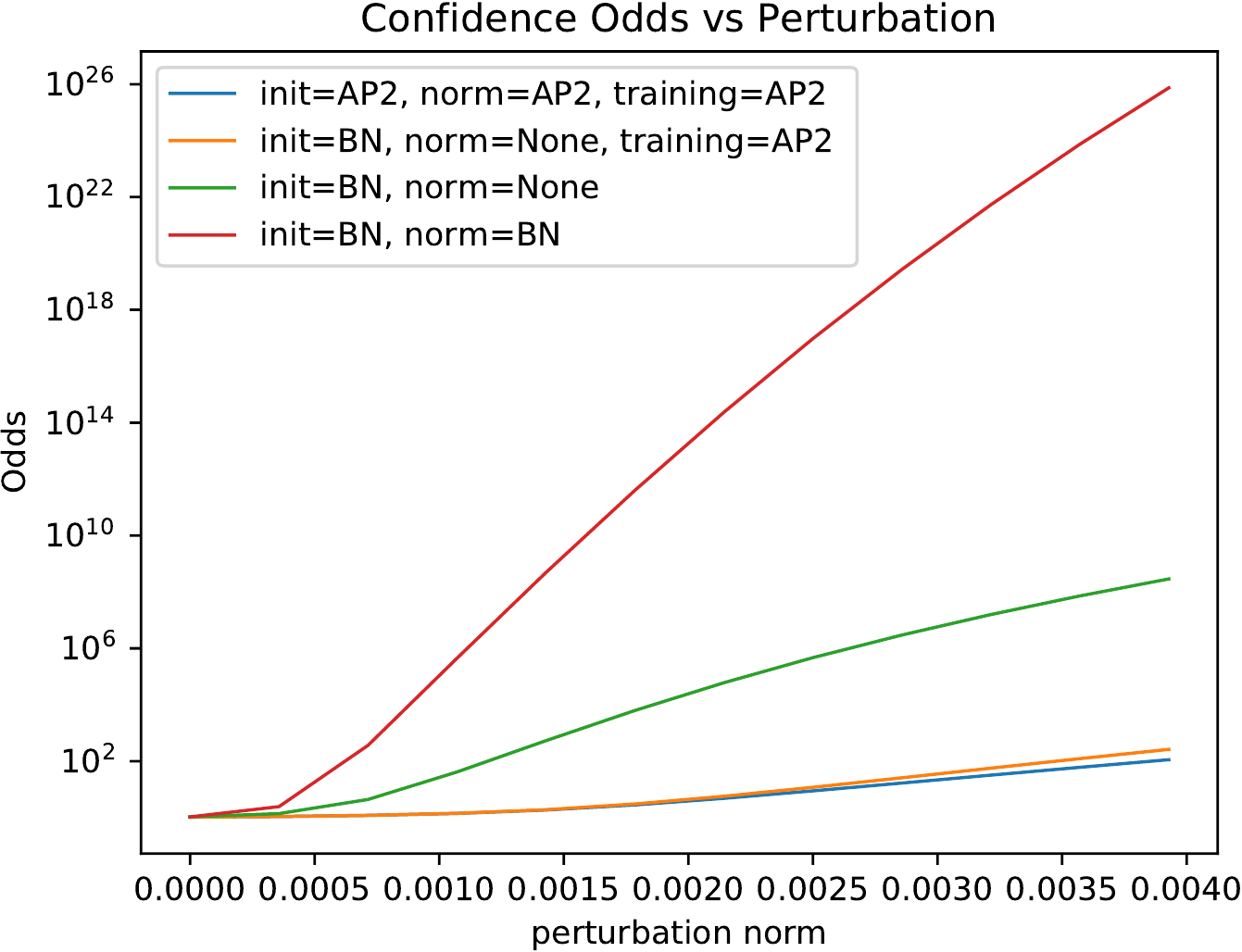}%
\\
\caption{\label{fig:stability-MLP-adv}
Stability evaluation of MLP/MNIST model under gradient sign attack. {\em Left}: test accuracy versus the maximum allowed norm of the adversarial perturbation who's direction is tailored per image. {\em Right} confidence odds as in~\cref{fig:stability-MLP}.
}
\end{figure}
\end{document}